\documentclass[11pt]{article}

\usepackage{natbib}
\usepackage{url,fancybox}
\usepackage{amsthm}
\usepackage{lscape}

\usepackage{booktabs}       
\usepackage{amsfonts}       
\usepackage{nicefrac}       
\usepackage{microtype}
\usepackage{graphicx} 
\usepackage{graphics}

\usepackage{algorithm}
\usepackage{algorithmic}

\usepackage{amsmath}
\usepackage{mathrsfs}
\usepackage{amssymb}

\usepackage{color}
\usepackage[left=1in,top=1in,right=1in,bottom=1in,letterpaper]{geometry} 
\usepackage{footnote}
\usepackage[normalem]{ulem}

\usepackage{multirow}
\usepackage{bigstrut}
\usepackage{longtable}
\usepackage{pdfpages}
\usepackage{cleveref}

\numberwithin{equation}{section}
\usepackage{amsmath}
\usepackage{mathrsfs}
\usepackage{amssymb}
\usepackage{amssymb}
\usepackage{graphicx}
\usepackage{subfig}

\usepackage{enumitem}
\usepackage{epstopdf}
\usepackage{color}
\usepackage{multirow}
\usepackage{verbatim}
\usepackage{algorithmic}
\usepackage{bm}

\providecommand{\keywords}[1]
{\small	
  \textbf{\textit{Keywords---}} #1
}
\newtheorem{theorem}{Theorem}[section]

\newtheorem{lemma}[theorem]{Lemma}

\newtheorem{definition}[theorem]{Definition}

\newcommand{\T}{\mathrm{T}}
\def\cM{\mathcal{M}}

\def\bx{\boldsymbol{x}}

\def\M{\mathcal{M}}

\def\bU{\boldsymbol{U}}

\def\b1{\boldsymbol{1}}

\newcommand{\normfro}[1]{\left\| #1 \right\|_{{F}}}

\newcommand{\inp}[2]{\left\langle #1, #2 \right\rangle}

\newcommand{\Prox}{\mathrm{Prox}}

\newcommand{\grad}{\mathop{\textbf{grad}}}
\newcommand{\Retr}{\textrm{Retr}}

\newcommand{\st}{\textrm{s.t.}}
\newcommand{\be}{\begin{equation}}
\newcommand{\ee}{\end{equation}}
\newcommand{\ba}{\begin{array}}
\newcommand{\ea}{\end{array}}

\newcommand{\Tr}{\textrm{Tr}}
\newcommand{\Proj}{\textrm{Proj}}
\newcommand{\argmin}{\mathop{\rm argmin}}
\newcommand{\bS}{\mathbb{S}}
\newcommand{\br}{\mathbb{R}}
\newcommand{\LCal}{\mathcal{L}}
\newcommand{\KNN}{\mathrm{KNN}} 
\newcommand{\ACal}{\mathcal{A}}

\begin{document}

\title{A Manifold Proximal Linear Method for Sparse Spectral Clustering with Application to Single-Cell RNA Sequencing Data Analysis\thanks{The first two authors contributed equally to this paper.}}
\author{Zhongruo Wang\thanks{Department of Mathematics, University of California, Davis}
\and Bingyuan Liu\thanks{Department of Statistics, The Pennsylvania State University}
\and Shixiang Chen\thanks{Department of Industrial \& Systems Engineering, Texas A\&M University}
\and Shiqian Ma\footnotemark[2]
\and Lingzhou Xue\footnotemark[3]
\and Hongyu Zhao\thanks{Department of Biostatistics, Yale University}
}
\date{\today}
\maketitle

\begin{abstract} 
Spectral clustering is one of the fundamental unsupervised learning methods and is widely used in data analysis. Sparse spectral clustering (SSC) imposes sparsity to the spectral clustering and it improves the interpretability of the model. This paper considers a widely adopted model for SSC, which can be formulated as an optimization problem over the Stiefel manifold with nonsmooth and nonconvex objective. Such an optimization problem is very challenging to solve. Existing methods usually solve its convex relaxation or need to smooth its nonsmooth part using certain smoothing techniques. In this paper, we propose a manifold proximal linear method (ManPL) that solves the original SSC formulation. We also extend the algorithm to solve the multiple-kernel SSC problems, for which an alternating ManPL algorithm is proposed. Convergence and iteration complexity results of the proposed methods are established. We demonstrate the advantage of our proposed methods over existing methods via the single-cell RNA sequencing data analysis.
\end{abstract}

\keywords{Riemannian Optimization, Manifold Proximal Linear Method, Sparse Spectral Clustering, Single-Cell RNA Sequencing Data Analysis} 

\maketitle

\section{Introduction}

Clustering is a fundamental unsupervised learning problem with wide applications. The hierarchical clustering, $K$-means clustering and spectral clustering (SC) methods are widely used in practice \citep{friedman2001elements}. It is known that interpretation of the dendrogram in hierarchical clustering can be difficult in practice, especially for large datasets. The $K$-means clustering, closely related to Lloyd's algorithm, does not guarantee to find the optimal solution and performs poorly for non-linearly separable or non-convex clusters. SC \citep{chung1997spectral,shi2000normalized,ng2002spectral} is a graph-based clustering method and it provides a promising alternative for identifying locally connected clusters.

Given the data matrix \(X=[\bx_{1}, \ldots, \bx_{n}] \in \mathbb{R}^{p \times n}\), where \(n\) is the number of data points and \(p\) is the feature dimension, SC constructs a symmetric affinity matrix \(S=\left(s_{i j}\right)_{n \times n}\), where \(s_{i j} \geq 0\) measures the pairwise similarity between two data samples \(\bx_{i}\) and \(\bx_{j}\) for \(i,j=1,\ldots,n\). 
Denote diagonal matrix \(D=\operatorname{Diag}\left(d_{1}, \ldots, d_{n}\right)\) with \(d_{i}=\sum_{j=1}^{n} s_{i j}\). The main step of SC is to compute the following eigenvalue decomposition:
\be\label{SC}
\min_{U \in \mathbb{R}^{n \times C}} \langle  U  U^{\top}, L \rangle, \ \st, \ U^{\top} U=I_C,
\ee
where \(L=I_n - D^{-1/2} S D^{-1 / 2}\) is the normalized Laplacian matrix, $I_C$ denotes the $C\times C$ identity matrix, and $C$ is the number of clusters. The rows of \(U\) can be regarded as an embedding of the data \(X\) from $\mathbb{R}^p$ to $\mathbb{R}^C$. The cluster assignment is then decided after using a standard clustering method such as the $K$-means clustering on the estimated embedding matrix \(\hat{U}\) obtained by solving \eqref{SC}. Ideally, $\hat{U}$ should be a sparse matrix such that $\hat{U}_{ij}\neq 0$ if and only if the sample $i$ belongs to the $j$-th cluster. Therefore $\hat{U}\hat{U}^\top$ should be a block diagonal matrix which is also sparse. To this end, the sparse spectral clustering (SSC) \citep{lu2016convex,lu2018nonconvex,park2018spectral} is proposed to impose sparsity on $\hat{U}\hat{U}^\top$, which leads to the following optimization problem:
\be\label{SSC}
\min_{U \in \mathbb{R}^{n \times C}} \langle  U  U^{\top}, L \rangle + \lambda\|UU^\top\|_1, \ \st, \ U^{\top} U=I_C,
\ee
where $\|Z\|_1 = \sum_{ij}|Z_{ij}|$ is the entry-wise $\ell_1$ norm of $Z$ and it promotes the sparsity of $Z$, and $\lambda>0$ is a weighting parameter.

In practice, the performance of SSC is sensitive to a single measure of similarity between data points, and there are no clear criteria to choose an optimal similarity measure. Moreover, for some very complex data such as the single-cell RNA sequencing (scRNA-seq) data \citep{kiselev2019challenges}, one may benefit from considering multiple similarity matrices because they provide more information to the data. The next-generation sequencing technologies provide large detailed catalogs of the transcriptomes of massive cells to identify putative cell types. Clustering high-dimensional scRNA-seq data provides an informative step to disentangle the complex relationship between different cell types. {For example, it is important to characterize the patterns of  monoallelic gene expression across mammalian cell types \citep{deng2014single}, explore 
the mechanisms that control the progression of lung progenitors across distinct  cell types \citep{treutlein2014reconstructing}, or study the functionally distinct lineage in the bone marrow across mouse conventional dendritic cell types \citep{schlitzer2015identification}.} To this end, \citet{park2018spectral} suggest the following similarity matrices which lead to multiple-kernel SSC (MKSSC): 
\[K_{\delta,m}(i,j)=\exp\left(\frac{\|\bx_i-\bx_j\|^2}{2\epsilon_{ij}^2}\right), \epsilon_{ij}=\frac{\delta(\mu_i+\mu_j)}{2}, \mu_i=\frac{\sum_{\ell\in\KNN (i)}\|\bx_i-\bx_\ell\|}{m},\]
where $\KNN(i)$ represents a set of sample indices that are the top $m$ nearest neighbors of the sample $\bx_i$. The parameters $\delta$ and $m$ control the width of the neighborhoods. We use $S(\delta)$ and $S(m)$ to denote the sets of possible choices of $\delta$ and $m$, respectively. Then the total number of similarity matrices is equal to $T = |S(\delta)|\cdot |S(m)|$. We denote the normalized Laplacian matrices corresponding to these $T$ similarity matrices as $L^{(\ell)}$, $\ell=1,\ldots,T$. The MKSSC can be formulated as the following optimization problem:
\begin{align}\label{prob:mkssc}
\min_{U\in\br^{n\times C},w\in\br^T} & \quad \bar{F}(U,w)\equiv \left\langle UU^\top, \sum_{\ell=1}^T w_\ell L^{(\ell)} \right\rangle + \lambda\|UU^\top\|_1 + \rho\sum_{\ell=1}^T w_\ell\log(w_\ell)\\
\st        & \quad U^{\top} U=I_C, \sum_{\ell=1}^T w_\ell =1, w_\ell \geq 0, \ell=1,\ldots,T, \nonumber  
\end{align}
where $w_\ell, \ell=1,\ldots,T$ are unknown weightings of the kernels, and $\rho\sum_{\ell=1}^T w_\ell\log(w_\ell)$ serves as an entropy regularization term,  {and $\lambda,\rho$ are two regularization parameters. 

Note that both SSC \eqref{SSC} and MKSSC \eqref{prob:mkssc} are nonconvex and nonsmooth with Riemannian manfiold constraints. Therefore, they are both numerically challenging to solve. In this paper, we propose a manifold proximal linear (ManPL) method for solving the SSC \eqref{SSC} {in Section \ref{sec:ManPL-SSC}}, and an alternating ManPL (AManPL) method for solving the MKSSC \eqref{prob:mkssc} {in Section \ref{sec:MKSSC}}. {The convergence rates of ManPL and AManPL are rigorously analyzed. Moreover, we present numerical results in Section \ref{simulation} to demonstrate the advantages of ManPL and AManPL in benchmark datasets, synthetic datasets, and real datasets in scRNA-seq data analysis.}

{\bf Our contributions} lie in several folds.
\begin{enumerate}
\item[(i)] We propose ManPL method for solving SSC \eqref{SSC} and AManPL method for solving MKSSC \eqref{prob:mkssc}.
\item[(ii)] We analyze the convergence and iteration complexity of both ManPL and AManPL.
\item[(iii)] We propose a proximal point algorithm based on a semi-smooth Newton method for solving the convex subproblems arising from ManPL and AManPL. 
\item[(iv)] We apply our proposed methods to clustering of scRNA-seq data.
\end{enumerate}

{\bf Notation.} Throughout this paper, we use $\M$ to denote the Stiefel manifold. The smoothness, convexity, and Lipschitz continuity of a function $f$ are always interpreted as the function is considered in the ambient Euclidean space. We use $\bS_{+}^n$ to denote the set of $n\times n$ positive semidefinite matrices, and $\Tr(Z)$ to denote the trace of matrix $Z$. 

\section{A Manifold Proximal Linear Method for SSC}\label{sec:ManPL-SSC}

Since SSC \eqref{SSC} is both nonsmooth and nonconvex, it is numerically challenging to solve. In the literature, convex relaxations and smooth approximations of \eqref{SSC} have been suggested. In particular, \citet{lu2016convex} proposed to replace $UU^\top$ with a positive semidefinite matrix $P$ and solve the following convex relaxation:
\be\label{SSC-convex-relax-Lu}
\min_{P\in\bS_{+}^n} \ \langle P, L\rangle + \lambda\|P\|_1, \ \st, \ 0\preceq P\preceq I, \ \Tr(P)=C.
\ee
This convex problem \eqref{SSC-convex-relax-Lu} can be solved by classical optimization algorithms such as ADMM. Denote the solution of \eqref{SSC-convex-relax-Lu} by $\hat{P}$, the solution of \eqref{SSC} can be approximated by the top $C$ eigenvectors of $\hat{P}$. In another work, 
\citet{lu2018nonconvex} proposed a nonconvex ADMM to solve the following smooth variant of \eqref{SSC}:
\be\label{SSC-nonconvex-Lu}
\min_{U\in\br^{n\times C},P\in\bS_+^n} \ \langle UU^\top, L\rangle + g_\sigma(P), \ \st, \ P=UU^\top, U^\top U = I_C,
\ee
where $g_\sigma(\cdot)$ is a smooth function with smoothing parameter $\sigma>0$ that approximates the $\ell_1$ regularizer $\lambda\|\cdot\|_1$. In \citep{lu2018nonconvex}, the authors used the following smooth function:
\be\label{def-g}
g_\sigma(P) := \max_Z \ \langle P,Z\rangle - \frac{\sigma}{2}\|Z\|_F^2, \ \st, \ \|Z\|_\infty\leq \lambda,
\ee
where $\|Z\|_\infty = \max_{ij}|Z_{ij}|$. The nonconvex ADMM for solving \eqref{SSC-nonconvex-Lu} typically iterates as
\begin{subequations}\label{eq:non-convex:admm}
\begin{align}
U^{k+1} & := \argmin_{U\in\br^{n\times C}} \ \LCal(U,P^k;\Lambda^k), \ \st, \ U^\top U = I_C, \label{eq:non-convex:admm-1}\\
P^{k+1} & := \argmin_{P\in\bS_+^n} \ \LCal(U^{k+1},P;\Lambda^k), \label{eq:non-convex:admm-2} \\
\Lambda^{k+1} & := \Lambda^k - \mu(P^{k+1}-U^{k+1}{U^{k+1}}^\top),\label{eq:non-convex:admm-3}
\end{align}
\end{subequations}
where the augmented Lagrangian function $\LCal$ is defined as
\[\LCal(U,P;\Lambda) := \langle UU^\top, L\rangle + g_\sigma(P) - \langle\Lambda,P-UU^\top \rangle + \frac{\mu}{2}\|P-UU^\top\|_F^2,\]
and $\mu>0$ is a penalty parameter. The two subproblems \eqref{eq:non-convex:admm-1} and \eqref{eq:non-convex:admm-2} are both relatively easy to solve. The reason to use the smooth function $g_\sigma(\cdot)$ to approximate $\lambda\|\cdot\|_1$ in \eqref{SSC-nonconvex-Lu} is for the purpose of convergence guarantee. In \citep{lu2018nonconvex}, the authors proved that any limit point of the sequence generated by the nonconvex ADMM \eqref{eq:non-convex:admm} is a stationary point of \eqref{SSC-nonconvex-Lu}. This result relies on the fact that function $g_\sigma$ is smooth. If one applies ADMM to the original SSC \eqref{SSC}, then no convergence guarantee is known.

Note that both the convex relaxation \eqref{SSC-convex-relax-Lu} and the smooth approximation \eqref{SSC-nonconvex-Lu} are only approximations to the original SSC \eqref{SSC}. In this section, we introduce our ManPL algorithm that solves the original SSC \eqref{SSC} directly. For the ease of presentation, we rewrite \eqref{SSC} as
\be\label{SSC-manopt-rewrite}
\min_U \ F(U) \equiv f(U) + h(c(U)), \ \st, \ U\in\M,
\ee
where $f(U) = \langle UU^\top, L\rangle$, $h(\cdot) = \lambda\|\cdot\|_1$, $c(U) = UU^\top$, $\M = \{U\in\br^{n\times C}\mid U^\top U=I_C\}$ is the Stiefel manifold. Moreover, note that $f$ and $c$ are smooth mappings, and $h$ is nonsmooth but convex in the ambient Euclidean space. Therefore, \eqref{SSC-manopt-rewrite} is a Riemannian optimization problem with nonsmooth and nonconvex objective function. Furthermore, throughout this paper, we use $L_f$, $L_c$, $L_h$ to denote the Lipschitz constants of $\nabla f$, $\nabla c$, and $h$, respectively.  Riemannian optimization has drawn much attention recently, due to its wide applications, including low rank matrix completion \citep{boumal2011rtrmc}, phase retrieval \citep{Boumal-phase-retrieval-2018,Sun-Ju-geometric-phase-retrieval-2018}, phase synchronization \citep{Boumal-phase-synchronization-2016,Liu-generalized-power-phase-synchronization-2017}, and dictionary learning \citep{Sra-Riemannian-dictionary-learning-2016,Sun-CDR-part1-2017}.  Several important classes of algorithms for Riemannian optimization with a smooth objective function were covered in the monograph \citep{AbsMahSep2008}. On the other hand, there has been very limited number of algorithms for Riemannian optimization with nonsmooth objective until very recently. The most natural idea for this class of optimization problems is the Riemannian subgradient method (RSGM) \citep{ferreira1998subgradient,grohs2016varepsilon,Hosseini-Uschmajew-2017}. Recently, \citet{li2019nonsmooth} studied the RSGM for Riemannian optimization with weakly convex objective. In particular, they showed that the number of iterations needed by RSGM for obtaining an $\epsilon$-stationary point is $O(\epsilon^{-4})$. Motivated by the proximal gradient method for solving composite minimization in Euclidean space, \citet{chen2018proximal} proposed a manifold proximal gradient method (ManPG) for solving the following Riemannian optimization problem: 
\begin{equation}\label{prob:manpg}
\min_X \ f(X) + h(X), \ \st, \ X\in\mathcal{M},
\end{equation}
where $\M$ is the Stiefel manifold, $f$ is a smooth function, and $h$ is a nonsmooth and convex function. A typical iteration of ManPG for solving \eqref{prob:manpg} is:
\begin{subequations}
\begin{align}
V^{k} & := \argmin_V \ \langle \nabla f(X^k), V \rangle + h(X^k+ V) + \frac{1}{2t}\|V\|_F^2, \ \st, \ V\in\T_{X^k}\M, \label{ManPG-alg-1}\\
X^{k+1} & := \Retr_{X^k}(\alpha_kV^{k}),\label{ManPG-alg-2}
\end{align}
\end{subequations}
where $\T_U\M$ denotes the tangent space of $\M$ at $U$, $\alpha_k>0$ is a step size, and $\Retr$ denotes the retraction operation. For the Stiefel manifold, $\T_U\M := \{V\in\br^{n\times C}\mid V^\top U + U^\top V = 0\}$.
Comparing with \eqref{prob:manpg}, we note that  \eqref{SSC-manopt-rewrite} is more difficult to solve, because of the nonconvex term $c(U)$. In fact, ManPG cannot be used to solve the SSC \eqref{SSC} because of the existence of the nonconvex term $UU^\top$ composite with the $\ell_1$ norm. As a result, new algorithm is demanded for solving SSC \eqref{SSC}. The iteration complexity of ManPG is proved to be $O(\epsilon^{-2})$ for obtaining an $\epsilon$-stationary point of \eqref{prob:manpg} \citep{chen2018proximal}, which is better than the complexity of RSGM \citep{li2019nonsmooth}. Variants of ManPG have been designed for different applications, such as alternating ManPG for sparse PCA and sparse CCA \citep{Chen-AManPG-2019}, FISTA for sparse PCA \citep{HW2019a}, manifold proximal point algorithm for robust subspace recovery and orthogonal dictionary learning \citep{chen2019manifold-Asilomar,chen2019manifold}, and stochastic ManPG \citep{Wang-stochastic-ManPG} for online sparse PCA. Moreover, ManPG has been extended to more general Riemannian proximal gradient method \citep{HW2019b}. Motivated by the success of ManPG and its variants, we propose a manifold proximal linear algorithm for solving SSC \eqref{SSC-manopt-rewrite}.

The proximal linear method has recently drawn great research attentions. It targets to solve the optimization problem in the form of \eqref{SSC-manopt-rewrite} without the manifold constraint, i.e., 
\be\label{PLM-prob}
\min_{x\in\br^n} \ f(x) + h(c(x)), 
\ee
where $f: \br^n\to\br$ and $c:\br^n\to\br^m$ are smooth mappings, $h:\br^m\to\br$ is convex and nonsmooth. The proximal linear method for solving \eqref{PLM-prob} iterates as follows:
\be\label{PLM-iter}
x^{k+1} := \argmin_x \ \langle \nabla f(x^k), x-x^k \rangle + h(c(x^k) + J(x^k) (x-x^k)) + \frac{1}{2t}\|x-x^k\|_2^2,
\ee
where $J(x) = \nabla c(x)$ is the Jacobian of $c$, and $t>0$ is a step size. Note that since $h$ is convex, the update \eqref{PLM-iter} is a convex problem. This method has been studied recently by \citep{lewis2016proximal,drusvyatskiy2018efficiency,Duchi-Ruan-SIOPT-2018} and applied to solving many important applications such as robust phase retrieval \citep{duchi2017solving}, robust matrix recovery \citep{Charisopoulos-prox-linear-MC-2019}, and robust blind deconvolution \citep{charisopoulos2019composite}. 

Due to the nonconvex constraint $U\in\M$, solving \eqref{SSC-manopt-rewrite} is more difficult than \eqref{PLM-prob}. Motivated by ManPG and the proximal linear method \eqref{PLM-iter}, we propose a ManPL algorithm for solving \eqref{SSC-manopt-rewrite}. A typical iteration of the ManPL algorithm for solving \eqref{SSC-manopt-rewrite} is:
\begin{subequations}\label{ManPL-alg}
\begin{align}
V^{k} & := \argmin_V \ \langle \nabla f(U^k), V \rangle + h(c(U^k) + J(U^k) V) + \frac{1}{2t}\|V\|_F^2, \ \st, \ V\in\T_{U^k}\M, \label{ManPL-alg-1}\\
U^{k+1} & := \Retr_{U^k}(\alpha_kV^{k}).\label{ManPL-alg-2}
\end{align}
\end{subequations}
Similar to \eqref{ManPG-alg-1}, the equation \eqref{ManPL-alg-1} computes the descent direction $V$ by minimizing a convex function over the tangent space of $\M$. However, solving \eqref{ManPL-alg-1} is more difficult than \eqref{ManPG-alg-1} because of the non-trivial affine function, i.e., $c(U^k) + J(U^k) V$, composite with the nonsmooth function $h$. Moreover, the difference of \eqref{ManPL-alg-1} and \eqref{PLM-iter} is the constraint in \eqref{ManPL-alg-1}, which is needed in the Riemannian optimization setting. Fortunately, \eqref{ManPL-alg-1} can still be solved efficiently by a proximal point algorithm combined with a semi-smooth Newton method, which will be elaborated in Section \ref{sec:ssn}. The retraction step \eqref{ManPL-alg-2} brings the iterate back to the manifold $\M$.

The complete description of the ManPL for solving SSC \eqref{SSC-manopt-rewrite} is given in Algorithm \ref{alg:manpl-ssc}. The step \eqref{eq:line-search} is a line search step to find the step size $\alpha_k$ such that there is a sufficient decrease on the function $F$. 

\begin{algorithm}[ht]
\begin{algorithmic}
\STATE{Input: initial point $U^0\in\M$, parameters $\gamma \in (0,1)$, $t>0$}
\FOR{$k = 0, 1, \ldots$}
    \STATE{Calculate $V^{k}$ by solving \eqref{ManPL-alg-1}} 
    \STATE{Let $j_k$ be the smallest nonnegative integer such that}
        \be\label{eq:line-search}F(\Retr_{U^k}(\gamma^{j_k}V^{k})) \leq F(U^k) - \frac{\gamma^{j_k}}{2t}\|V^{k}\|_F^2\ee
    \STATE{Let $\alpha_k = \gamma^{j_k}$ and compute $U^{k+1}$ by \eqref{ManPL-alg-2}}
\ENDFOR 
\end{algorithmic}
\caption{The ManPL for SSC \eqref{SSC-manopt-rewrite}}\label{alg:manpl-ssc}
\end{algorithm}

The main convergence and iteration complexity result of ManPL (Algorithm \ref{alg:manpl-ssc}) is given in Theorem \ref{thm:manpl-convergence}. Its proof is given in the appendix. 
\begin{theorem}\label{thm:manpl-convergence}
Assume $F(U)$ is lower bounded by $F^*$.	The limit point of the sequence $\{U^{k}\}$ generated by ManPL (Algorithm \ref{alg:manpl-ssc}) is a stationary point of \eqref{SSC-manopt-rewrite}. Moreover, ManPL returns an $\epsilon$-stationary point of \eqref{SSC-manopt-rewrite} in $O(\epsilon^{-2})$ iterations. 
\end{theorem}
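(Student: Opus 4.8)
The plan is to follow the template of proximal-linear and manifold proximal-gradient analyses, with the extra care needed for the composite nonconvex map $c(U)=UU^\top$ and for the retraction. Throughout I will write $\ell_k(V):=\langle\nabla f(U^k),V\rangle+h(c(U^k)+J(U^k)V)+\frac{1}{2t}\normfro{V}^2$ for the subproblem objective in \eqref{ManPL-alg-1}, so that $V^k=\argmin_{V\in\T_{U^k}\M}\ell_k(V)$. Since $h$ is convex and the quadratic term is $\frac1t$-strongly convex, $\ell_k$ is $\frac1t$-strongly convex on the subspace $\T_{U^k}\M$; comparing the minimizer $V^k$ with the feasible point $V=0$ gives $\ell_k(0)-\ell_k(V^k)\ge\frac{1}{2t}\normfro{V^k}^2$, i.e.\ the model-decrease inequality
\[
\langle\nabla f(U^k),V^k\rangle + h\!\left(c(U^k)+J(U^k)V^k\right) - h(c(U^k)) \;\le\; -\tfrac1t\normfro{V^k}^2,
\]
which is the engine of the whole proof. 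The same comparison, together with the $L_h$-Lipschitz continuity of $h$ and the uniform bounds $G_f:=\sup_{U\in\M}\normfro{\nabla f(U)}$ and $G_c:=\sup_{U\in\M}\normop{J(U)}$ (finite since $\M$ is compact), yields $\normfro{V^k}\le t(L_hG_c+G_f)=:R$, so the search directions are uniformly bounded.

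The main step is a sufficient-decrease lemma controlling $F(\Retr_{U^k}(\alpha V^k))-F(U^k)$ for $\alpha\in(0,1]$. I would first bound $\Retr_{U^k}(\alpha V^k)-(U^k+\alpha V^k)$ by $M\alpha^2\normfro{V^k}^2$, a uniform second-order retraction estimate valid for $\normfro{V^k}\le R$ on the compact $\M$, and then split the decrease into an $f$-part and an $h\circ c$-part. For $f$ I apply the Euclidean descent lemma from $L_f$-smoothness; for $h\circ c$ I use that $h$ is $L_h$-Lipschitz to pass from $c(\Retr_{U^k}(\alpha V^k))$ to the linearization $c(U^k)+\alpha J(U^k)V^k$ at the cost of an $O(\alpha^2\normfro{V^k}^2)$ term, which absorbs both the prox-linear error $\normfro{c(U^k+\alpha V^k)-c(U^k)-\alpha J(U^k)V^k}\le\frac{L_c}{2}\alpha^2\normfro{V^k}^2$ and the retraction error. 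The crucial convexity step is
\[
h\!\left(c(U^k)+\alpha J(U^k)V^k\right)-h(c(U^k)) \;\le\; \alpha\big[h\!\left(c(U^k)+J(U^k)V^k\right)-h(c(U^k))\big],
\]
obtained by writing $c(U^k)+\alpha J(U^k)V^k=(1-\alpha)c(U^k)+\alpha(c(U^k)+J(U^k)V^k)$ and invoking convexity of $h$. Combining these pieces with the model-decrease inequality gives $F(\Retr_{U^k}(\alpha V^k))-F(U^k)\le -\frac{\alpha}{t}\normfro{V^k}^2+C\alpha^2\normfro{V^k}^2$ for a constant $C=C(L_f,L_c,L_h,G_f,G_c,M)$. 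Hence the Armijo condition \eqref{eq:line-search} holds whenever $\alpha\le\min\{1,\tfrac{1}{2Ct}\}=:\bar\alpha$, so the line search terminates, and by minimality of $j_k$ the accepted step satisfies $\alpha_k\ge\gamma\bar\alpha=:\alpha_{\min}>0$.

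With a uniform step-size floor in hand, the rest is routine. Summing \eqref{eq:line-search} over $k=0,\dots,K-1$ and using $F\ge F^*$ gives $\frac{\alpha_{\min}}{2t}\sum_{k=0}^{K-1}\normfro{V^k}^2\le F(U^0)-F^*$, whence $\normfro{V^k}\to0$ and $\min_{0\le k<K}\normfro{V^k}^2\le\frac{2t(F(U^0)-F^*)}{\alpha_{\min}K}$. Since $V^k=0$ is exactly the first-order stationarity condition of \eqref{SSC-manopt-rewrite} (it says that $0$ minimizes the linearized model over $\T_{U^k}\M$), declaring $U^k$ an $\epsilon$-stationary point when $\normfro{V^k}\le\epsilon$ yields the claimed $O(\epsilon^{-2})$ iteration complexity. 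For the limit-point statement I would take a convergent subsequence $U^{k_j}\to U^*$, note $V^{k_j}\to0$, and invoke continuity of the subproblem solution map $U\mapsto V(U)$ — unique by strong convexity and depending continuously on $\nabla f(U)$, $c(U)$, $J(U)$ and on the tangent space $\T_U\M$ — to conclude $V(U^*)=0$, i.e.\ $U^*$ is stationary.

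The step I expect to be the main obstacle is the sufficient-decrease lemma: one must simultaneously juggle three distinct second-order error sources — the Euclidean descent lemma for $f$, the first-order prox-linear approximation error of the nonconvex map $c$ filtered through the Lipschitz $h$, and the retraction's deviation from the straight step $U^k+\alpha V^k$ — and then rely on the convexity of $h$ to extract the linear-in-$\alpha$ leading term that the model-decrease inequality controls. Getting the $\alpha$-dependence right here, namely a linear leading term with quadratic error, is precisely what makes the line search well posed and delivers the uniform floor $\alpha_{\min}$; everything downstream is telescoping and a standard continuity argument.
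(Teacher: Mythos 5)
Your proposal is correct and takes essentially the same route as the paper: your model-decrease inequality is exactly the paper's key lemma (derived there by comparing $V^k$ with $\alpha V^k$ and letting $\alpha\to 1$ rather than by strong convexity, but yielding the identical bound $-\tfrac{1}{t}\|V^k\|_F^2$), your sufficient-decrease lemma combines the same three error sources (descent lemma for $f$, prox-linear error of $c$ filtered through the $L_h$-Lipschitz $h$, and the retraction bounds $M_1,M_2$) to reach the same $-\tfrac{\alpha}{t}\|V^k\|_F^2 + C\alpha^2\|V^k\|_F^2$ estimate and step-size floor, and your telescoping/limit-point finish is precisely the part the paper delegates to the proof of Theorem 5.5 of \citet{chen2018proximal}.
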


\section{A Semi-Smooth Newton-based Proximal Point Algorithm for the Subproblem}\label{sec:ssn}

In this section, we introduce a proximal point algorithm (PPA) combined with a semi-smooth Newton method (SSN) for solving the subproblem \eqref{ManPL-alg-1} in ManPL. The notion of semi-smoothness was originally introduced by  \citep{mifflin1977semismooth} for real valued functions and later extended to vector-valued mappings by \citep{qi1993nonsmooth}. The SSN method has recently received significant amount of attention due to
its success in solving structured convex problems to a high accuracy in problems such as LASSO \citep{Li-Sun-Toh-Lasso,yang2013proximal}, convex clustering \citep{wang2010solving}, SDP \citep{zhao2010newton}, and convex composite problems \citep{xiao2018regularized}.

For simplicity of the notation, we omit the index $k$ in \eqref{ManPL-alg-1}, and denote $Z_1 := \nabla f(U^k)$, $Z_2:=c(U^k)$, $J = J(U^k)$, and operator $\ACal: V \to V^\top U^k + (U^k)^\top V$. Therefore, \eqref{ManPL-alg-1} reduces to the following form: 
\begin{equation}\label{newton_ppa}
\min_{V,Y} \ \frac{1}{2t} \|V + Z_1 \|_F^2 + h(Z_2 + Y), \ \st, \ \ACal(V) = 0, \ Y  = JV.  
\end{equation}
Note that we have introduced a variable $Y$ to replace $JV$. The Lagrangian function for \eqref{newton_ppa} is given by: 
\[
\mathcal{L}(V,Y;\Gamma_1,\Gamma_2) = \frac{1}{2t} \|V + Z_1 \|_F^2 + h(Z_2 + Y) - \langle \Gamma_1, \mathcal{A}(V)\rangle - \langle \Gamma_2, JV - Y\rangle,
\]
where $\Gamma_1$ and $\Gamma_2$ are the Lagrange multipliers associated to the two equality constraints. Therefore, \eqref{newton_ppa} is equivalent to
\begin{equation}\label{lagrangian}
	\min_{V,Y} \{G(V,Y) := \max_{\Gamma_1,\Gamma_2} \mathcal{L}(V,Y;\Gamma_1,\Gamma_2)\}.
\end{equation} 
The minimization problem \eqref{lagrangian} can be solved by a PPA, which iterates as:
\begin{align}
	(V^{k+1},Y^{k+1}) & := \argmin_{V,Y} G(V^k,Y^k) + \frac{1}{2\beta}\left(\|V - V^k\|_F^2 + \|Y-Y^k\|_F^2 \right)	\nonumber \\
	& := \argmin_{V,Y}\max_{\Gamma_1,\Gamma_2} \mathcal{L}(V,Y;\Gamma_1,\Gamma_2) + \frac{1}{2\beta}\left(\|V - V^k\|_F^2 + \|Y-Y^k\|_F^2 \right)	, \label{primal_problem_PPA}
\end{align}
where $\beta>0$ is a parameter. The problem \eqref{primal_problem_PPA} is equivalent to:
\begin{equation}\label{Saddle point formulation}
\max_{\Gamma_1,\Gamma_2}\min_{V,Y} \mathcal{L}(V,Y;\Gamma_1,\Gamma_2) + \frac{1}{2\beta}\left(\|V - V^k\|_F^2 + \|Y-Y^k\|_F^2 \right)	.
\end{equation}
Note that the minimization part of \eqref{Saddle point formulation} is strongly convex and admits a closed-form solution given by: 
\begin{equation}\label{Saddle point formulation-min-sol}
V = \frac{t\beta}{t+\beta}\left(\ACal^*(\Gamma_1)+J^\top\Gamma_2-\frac{1}{t}Z_1+\frac{1}{\beta}V^k\right), \ Y = \Prox_{\beta h}(Z_2+Y^k-\beta\Gamma_2)-Z_2,
\end{equation}
where $\Prox_g$ denotes the proximal mapping of function $g$, which is defined as:
\[
\Prox_g(Z) := \argmin_X \ g(X) + \frac{1}{2}\|X-Z\|_F^2.
\]
For simplicity of the notation, we define function $E(\Gamma_2) := Z_2 + Y^k - \beta \Gamma_2$.  
Substituting \eqref{Saddle point formulation-min-sol} to \eqref{Saddle point formulation}, and using the Moreau identity, we know that \eqref{Saddle point formulation} is equivalent to: 
\begin{align}\label{Saddle point formulation-equiv}
\max_{\Gamma_1,\Gamma_2} \Theta(\Gamma_1,\Gamma_2):= &  -\frac{1}{2}\frac{t\beta}{t+\beta}\left\|\frac{1}{t}Z_1-\frac{1}{\beta}V^k-\ACal^*(\Gamma_1)-J^\top\Gamma_2\right\|_F^2 \\ 
& + h(\Prox_{\beta h}E(\Gamma_2)) + \beta\|\Prox_{h^*/\beta}(E(\Gamma_2)/\beta)\|_F^2 +\langle \Gamma_2,Y^k\rangle - \frac{\beta}{2}\|\Gamma_2\|_F^2, \nonumber
\end{align}
where $h^*$ denotes the conjugate function of $h$. Now by denoting 
\begin{equation}\label{def-Psi}
\Psi(\Gamma_2) := \max_{\Gamma_1} \Theta(\Gamma_1,\Gamma_2), 
\end{equation}
it is easy to verify that $\Psi(\Gamma_2)$ is strongly concave and continuously differentiable \citep{Li-Sun-Toh-Lasso}, and its unique maximizer is found by solving the following nonsmooth system:
\begin{equation}\label{SSN-equation}
\nabla \Psi(\Gamma_2) = 0.
\end{equation}
Solving \eqref{SSN-equation} can be done by using SSN \citep{Li-Sun-Toh-Lasso,xiao2018regularized}. After we obtain the solution to \eqref{SSN-equation}, the optimal $\Gamma_1$ can be found by solving the maximization problem in \eqref{def-Psi}, which is an easy least-squares problem. 

To summarize, the PPA for solving \eqref{ManPL-alg-1} is given by  \eqref{primal_problem_PPA}, and its solution is given by \eqref{Saddle point formulation-min-sol}. The required $\Gamma_2$ in \eqref{Saddle point formulation-min-sol} is obtained by solving \eqref{SSN-equation} using SSN, and $\Gamma_1$ in \eqref{Saddle point formulation-min-sol} is obtained by solving the least-squares problem in \eqref{def-Psi}. The convergence of the PPA and the SSN has been well studied in the literature \citep{Li-Sun-Toh-Lasso,yang2013proximal}.

\section{An Alternating ManPL Method for Multiple-Kernel SSC}\label{sec:MKSSC}

In this section, we consider the multiple-kernel SSC \eqref{prob:mkssc}. 
\citet{park2018spectral} consider to solve the following relaxation of \eqref{prob:mkssc} by letting $P=UU^\top$:
\begin{align}\label{prob:mkssc-park}
\min_{P,w} & \quad \left\langle P, \sum_{\ell=1}^T w_\ell L^{(\ell)} \right\rangle + \lambda\|P\|_1 + \rho\sum_{\ell=1}^T w_\ell\log(w_\ell)\\
\st        & \quad \Tr(P)=C, 0\preceq P\preceq I, \sum_{\ell=1}^T w_\ell =1, w_\ell \geq 0, \ell=1,\ldots,T. \nonumber  
\end{align}
Note that this is still a nonconvex problem due to the bilinear term in the objective function. 
\citet{park2018spectral} suggested to use an alternating minimization algorithm (AMA) to solve \eqref{prob:mkssc-park}. Note that this method is named MPSSC in \citep{park2018spectral}. In the $k$-th iteration of AMA, one first fixes $w$ as $w^k$ and solves the resulting problem with respect to $P$ to obtain $P^{k+1}$, and then fixes $P$ as $P^{k+1}$ and solves the resulting problem with respect to $w$ to obtain $w^{k+1}$. In particular, when $w$ is fixed as $w^k$, problem \eqref{prob:mkssc-park} reduces to
\begin{equation}\label{prob:mkssc-park-P}
\min_{P} \quad \left\langle P, \sum_{\ell=1}^T w_\ell^k L^{(\ell)} \right\rangle + \lambda\|P\|_1, \ \st, \ \Tr(P)=C, 0\preceq P\preceq I,
\end{equation}
which is a convex problem and can be solved via convex ADMM algorithm. When $P$ is fixed as $P^{k+1}$, problem \eqref{prob:mkssc-park} reduces to 
\begin{equation}\label{prob:mkssc-park-w}
\min_{w} \quad c^\top w + \rho\sum_{\ell=1}^T w_\ell\log(w_\ell), \ \st, \ \sum_{\ell=1}^T w_\ell =1, w_\ell \geq 0, \ell=1,\ldots,T, 
\end{equation}
where $c_\ell = \langle P^{k+1}, L^{(\ell)}\rangle$, $\ell=1,\ldots,T$. This is also a convex problem and it can be easily verified that \eqref{prob:mkssc-park-w} admits a closed-form solution given by
\be\label{prob:mkssc-w-sol}
w_\ell = \frac{\exp(-c_\ell/\rho)}{\sum_{j=1}^T \exp(-c_j/\rho)}, \ \ell = 1,\ldots, T.
\ee
In summary, a typical iteration of the AMA algorithm proposed by  \citep{park2018spectral} is as follows:
\be\label{AMA-park-zhao}
\left\{\begin{array}{l}
\mbox{update } P^{k+1} \mbox{ by solving \eqref{prob:mkssc-park-P}}  \\
\mbox{update } w^{k+1} \mbox{ by solving \eqref{prob:mkssc-park-w}}.
\end{array}\right.
\ee

Another approach to approximate \eqref{prob:mkssc} is to combine the idea of AMA \eqref{AMA-park-zhao} and the nonconvex ADMM for solving the smooth problem \eqref{SSC-nonconvex-Lu}. In particular, one can solve the following smooth variant of \eqref{prob:mkssc}:
\begin{equation}\label{prob:mkssc-smooth}
\begin{split}
\min_{U\in\br^{n\times C},w\in\br^T} & \quad  \left\langle UU^\top, \sum_{\ell=1}^T w_\ell L^{(\ell)} \right\rangle + g_\sigma(UU^\top) + \rho\sum_{\ell=1}^T w_\ell\log(w_\ell)\\
\st        & \quad U^{\top} U=I_C, \sum_{\ell=1}^T w_\ell =1, w_\ell \geq 0, \ell=1,\ldots,T, 
\end{split}
\end{equation}
where $g_\sigma(\cdot)$ is the smooth approximation to $\lambda\|\cdot\|_1$ defined in \eqref{def-g}. When fixing $w$, \eqref{prob:mkssc-smooth} is in the same form as the smoothed SSC \eqref{SSC-nonconvex-Lu}, so it can be solved by the nonconvex ADMM \eqref{eq:non-convex:admm}. When fixing $U$, \eqref{prob:mkssc-smooth} 
is in the same form as \eqref{prob:mkssc-park-w}, and admits a closed-form solution \eqref{prob:mkssc-w-sol}. In summary, the AMA+ADMM algorithm for solving \eqref{prob:mkssc-smooth} works as follows:
\be\label{prob:mkssc-smooth-AMA+NADMM}
\left\{\ba{l}
\mbox{ update } U^{k+1} \mbox{ by solving \eqref{prob:mkssc-smooth} with } w \mbox{ fixed as } w^k \mbox{ using nonconvex ADMM \eqref{eq:non-convex:admm}} \\
\mbox{ update } w^{k+1} \mbox{ by solving \eqref{prob:mkssc-smooth} with } U \mbox{ fixed as } U^{k+1} \mbox{ using \eqref{prob:mkssc-w-sol}}.
\ea\right.
\ee

By exploiting the structure of \eqref{prob:mkssc}, we propose to solve \eqref{prob:mkssc} by an alternating ManPL algorithm (AManPL). More specifically, in the $k$-th iteration of AManPL, we first fix $w$ as $w^k$, then \eqref{prob:mkssc} reduces to
\be\label{prob:mkssc-U}
\min_{U}\ \left\langle UU^\top, \sum_{\ell=1}^T w_\ell^k L^{(\ell)} \right\rangle + \lambda\|UU^\top\|_1, \ \st, \ U\in\M, 
\ee
which is in the same form of \eqref{SSC-manopt-rewrite} with $L$ in \eqref{SSC-manopt-rewrite} replaced by $\bar{L}:=\sum_{\ell=1}^T w_\ell^k L^{(\ell)}$. Therefore, \eqref{prob:mkssc-U} can also be solved by ManPL. Here we adopt one step of ManPL, i.e., \eqref{ManPL-alg} to obtain $U^{k+1}$. More specifically, 
$U^{k+1}$ is computed by the following two steps:
\begin{subequations}\label{AManPL-alg-U}
\begin{align}
V^{k} & := \argmin_V \ \langle \nabla_U f(U^k,w^k), V \rangle + h(c(U^k) + J(U^k) V) + \frac{1}{2t}\|V\|_F^2, \ \st, \ V\in\T_{U^k}\M, \label{AManPL-alg-U-1}\\
U^{k+1} & := \Retr_{U^k}(\alpha_kV^{k}),\label{AManPL-alg-U-2}
\end{align}
\end{subequations}
where $f(U,w) := \left\langle UU^\top, \sum_{\ell=1}^T w_\ell L^{(\ell)} \right\rangle$, $h(\cdot):=\lambda\|\cdot\|_1$, and $c(U) = UU^\top$. Note that \eqref{AManPL-alg-U-1} can be solved by the same PPA+SSN algorithm discussed in Section \ref{sec:ssn}.
We then fix $U$ in \eqref{prob:mkssc} as $U^{k+1}$, and then \eqref{prob:mkssc} reduces to
\be\label{prob:mkssc-w}
\min_{w} \ c^\top w + \rho\sum_{\ell=1}^T w_\ell\log(w_\ell), \ \st, \ \sum_{\ell=1}^T w_\ell =1, w_\ell \geq 0, \ell=1,\ldots,T, 
\ee
where $c_\ell = \langle U^{k+1}{U^{k+1}}^\top, L^{(\ell)} \rangle$, $\ell=1,\ldots,T$. We then obtain $w^{k+1}$ by solving \eqref{prob:mkssc-w}, which admits a closed-form solution given by \eqref{prob:mkssc-w-sol}. 
The AManPL is described in Algorithm \ref{alg:MKSSC}.
\begin{algorithm}[ht]
\begin{algorithmic}
\STATE{Input: parameter $\gamma\in(0,1)$, initial point $U^0 \in \mathcal{M}$, let $w^0$ be the optimal solution to \eqref{prob:mkssc-w} for $c_\ell = \langle U^{0}{U^{0}}^\top, L^{(\ell)} \rangle$ }
\FOR{$k = 0, 1, \ldots$}
    \STATE{Calculate $V^{k}$ by solving \eqref{AManPL-alg-U-1}}
    \STATE{Let $j_k$ be the smallest nonnegative integer such that}
        \be\label{eq:line-search-mkssc}\bar{F}(\Retr_{U^k}(\beta^{j_k}V^{k}),w^k) \leq \bar{F}(U^k,w^k) - \frac{\gamma^{j_k}}{2t}\|V^{k}\|_F^2\ee
    \STATE{Let $\alpha_k = \gamma^{j_k}$ and compute $U^{k+1}$ by \eqref{AManPL-alg-U-2}}
    \STATE{Update $w_\ell^{k+1}$ by \eqref{prob:mkssc-w-sol} with $c_\ell = \langle U^{k+1}{U^{k+1}}^\top, L^{(\ell)} \rangle$, $\ell=1,\ldots,T$}
\ENDFOR 
\end{algorithmic}
\caption{The AManPL Method for Solving MKSSC \eqref{prob:mkssc}}\label{alg:MKSSC}
\end{algorithm}

We have the following convergence and iteration complexity result for AManPL for solving MKSSC \eqref{prob:mkssc}. Its proof is given in the appendix. 
\begin{theorem}\label{thm:amanpl-convergence}
	Assume $\bar F(U,w)$ in \eqref{prob:mkssc} is lower bounded by $\bar F^*$. 
	The limit point of the sequence $\{U^{k},w^k\}$ generated by AManPL (Algorithm \ref{alg:MKSSC}) is a stationary point of problem \eqref{prob:mkssc}. Moreover, to obtain an $\epsilon$-stationary point of problem \eqref{prob:mkssc}, the number of iterations needed by AManPL is $O(\epsilon^{-2})$.  
\end{theorem}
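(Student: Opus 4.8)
The plan is to follow the two-block analysis in parallel with the single-kernel result of Theorem~\ref{thm:manpl-convergence}, treating the $U$-update and the $w$-update as two separate sources of sufficient decrease in the joint objective $\bar F$. The key point is that although the two blocks are coupled, each block-update alone decreases $\bar F$ by a controllable amount, and summing these decreases against the lower bound $\bar F^*$ yields both the vanishing of the stationarity residual and the iteration complexity.

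First I would establish sufficient decrease for the $U$-block. Fixing $w=w^k$, the update \eqref{AManPL-alg-U} is exactly one ManPL step applied to \eqref{prob:mkssc-U}, which is of the same form as \eqref{SSC-manopt-rewrite} with $\bar L=\sum_{\ell=1}^T w_\ell^k L^{(\ell)}$ in place of $L$. The line-search rule \eqref{eq:line-search-mkssc} therefore gives
\[
\bar F(U^{k+1},w^k) \le \bar F(U^k,w^k) - \frac{\alpha_k}{2t}\normfro{V^k}^2.
\]
Next, for the $w$-block, I would use that $w^{k+1}$ is the \emph{exact} minimizer of $\bar F(U^{k+1},\cdot)$ over the simplex, and that this objective is $\rho$-strongly convex in $w$: the negative-entropy term $\rho\sum_\ell w_\ell\log(w_\ell)$ is $\rho$-strongly convex (with respect to $\normtwo{\cdot}$) on the probability simplex, since its Hessian $\rho\,\mathrm{diag}(1/w_\ell)\succeq\rho I$ there, while the remaining term $\langle U^{k+1}{U^{k+1}}^\top,\sum_\ell w_\ell L^{(\ell)}\rangle$ is linear in $w$ and preserves this modulus. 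Strong convexity of the $w$-subproblem then yields
\[
\bar F(U^{k+1},w^{k+1}) \le \bar F(U^{k+1},w^k) - \frac{\rho}{2}\normtwo{w^{k+1}-w^k}^2.
\]
Chaining the two inequalities produces the joint descent
\[
\bar F(U^{k+1},w^{k+1}) \le \bar F(U^k,w^k) - \frac{\alpha_k}{2t}\normfro{V^k}^2 - \frac{\rho}{2}\normtwo{w^{k+1}-w^k}^2,
\]
and telescoping against $\bar F^*$ shows that $\sum_k \alpha_k\normfro{V^k}^2$ and $\sum_k \normtwo{w^{k+1}-w^k}^2$ are both finite.

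To turn summability into an $O(\epsilon^{-2})$ rate, I would show that the step sizes $\alpha_k$ are bounded below by a positive constant $\alpha_{\min}$ independent of $k$. Because $w^k$ always lies in the compact simplex, $\bar L^k=\sum_\ell w_\ell^k L^{(\ell)}$ has operator norm at most $\max_\ell\normop{L^{(\ell)}}$, so the Lipschitz constants $L_f$, $L_c$, $L_h$ governing the $U$-subproblem are uniform in $k$; the step-size lower bound established in the proof of Theorem~\ref{thm:manpl-convergence} then applies verbatim at every iteration. Consequently $\sum_k \normfro{V^k}^2 \le \frac{2t}{\alpha_{\min}}\bigl(\bar F(U^0,w^0)-\bar F^*\bigr)$, so $\min_{0\le k\le N-1}\normfro{V^k}^2=O(1/N)$ and an iterate with $\normfro{V^k}\le\epsilon$ is reached within $N=O(\epsilon^{-2})$ iterations, while $\normtwo{w^{k+1}-w^k}\to 0$.

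Finally, for the limit-point claim I would take a convergent subsequence $(U^{k_j},w^{k_j})\to(U^*,w^*)$. Since $\normfro{V^{k_j}}\to 0$ and the subproblem \eqref{AManPL-alg-U-1} has a unique solution depending continuously on $(U,\bar L)$, hence continuously on $(U,w)$ through $\bar L$, the residual at $(U^*,w^*)$ is zero, so $U^*$ is Riemannian-stationary for the composite objective at $w^*$; meanwhile the closed-form softmax update \eqref{prob:mkssc-w-sol} is continuous in $U$, and $\normtwo{w^{k+1}-w^k}\to 0$ forces $w^*$ to satisfy the first-order optimality conditions of the $w$-subproblem at $U^*$. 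Together these give joint stationarity of \eqref{prob:mkssc} at $(U^*,w^*)$. I expect the main obstacle to be precisely this coupling: the residual $V^k$ is computed at $w^k$ whereas stationarity must be asserted jointly, so the argument hinges on the uniform Lipschitz/continuity estimates that let one pass $V^k\to 0$ (evaluated at the shifting $w^k$) to a genuine $U$-stationarity certificate at the common limit, with $w^{k+1}-w^k\to 0$ closing the gap between the $w$ used in the $U$-step and the $w$ produced by the $w$-step.
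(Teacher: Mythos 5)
Your proposal is correct and follows essentially the same route as the paper's own proof: sufficient decrease from the ManPL line-search step in $U$ (with a uniform step-size lower bound coming from $\|\sum_\ell w_\ell^k L^{(\ell)}\|_2\le 1$ on the simplex), sufficient decrease from the $\rho$-strongly convex $w$-subproblem, telescoping against $\bar F^*$ for the $O(\epsilon^{-2})$ rate, and passing the subproblem optimality conditions to the limit along a convergent subsequence for stationarity. The only cosmetic differences are that the paper states the $w$-decrease in the $\ell_1$ norm rather than $\ell_2$, and proves limit-point stationarity by taking limits directly in the subdifferential inclusions rather than invoking continuity of the subproblem solution map.
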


\section{Numerical Experiments}\label{simulation}

In this section, we compare our proposed methods ManPL and AManPL with some existing methods for solving SSC and MKSSC. In particular, for SSC \eqref{SSC}, we compare ManPL (Algorithm \ref{alg:manpl-ssc}) with convex ADMM \citep{lu2016convex} (denoted by CADMM \footnote{cdoes downloaded from \url{https://github.com/canyilu/LibADMM/blob/master/algorithms/sparsesc.m}}) for solving \eqref{SSC-convex-relax-Lu} and nonconvex ADMM \citep{lu2018nonconvex} (denoted by NADMM) for solving \eqref{SSC-nonconvex-Lu}. We also include the spectral clustering (denoted by SC) in the comparison. For MKSSC \eqref{prob:mkssc}, we compare AManPL (Algorithm \ref{alg:MKSSC}) with MPSSC (i.e., AMA+CADMM \footnote{codes downloaded from \url{https://github.com/ishspsy/project/tree/master/MPSSC}}) \citep{park2018spectral} and AMA+NADMM \eqref{prob:mkssc-smooth-AMA+NADMM}. 
All the algorithms were terminated when the absolute change of the objective value is smaller than $10^{-5}$, which indicates that the algorithms were not making much progress. 

\begin{table}[ht]
\begin{center}
\begin{tabular}{|c|c|c|}\hline
Problem & Algorithm & Parameters \\\hline\hline
Convex SSC \eqref{SSC-convex-relax-Lu} & CADMM \citep{lu2016convex} & $\lambda = 10^{-4}$ \\\hline
Smoothed SSC \eqref{SSC-nonconvex-Lu} & NADMM \eqref{eq:non-convex:admm} \citep{lu2018nonconvex} & $\lambda=10^{-4}, \sigma =10^{-2}$ \\\hline
Original SSC \eqref{SSC} & ManPL (Algorithm \ref{alg:manpl-ssc}) & $\lambda = 10^{-3}$ \\\hline\hline
MKSSC \eqref{prob:mkssc-park} & AMA+CADMM \eqref{AMA-park-zhao} \citep{park2018spectral} & $\lambda=10^{-4}$, $\rho=0.2$ \\\hline
Smoothed MKSSC \eqref{prob:mkssc-smooth} & AMA+NADMM \eqref{prob:mkssc-smooth-AMA+NADMM} & $\lambda = 10^{-4}, \rho =0.2 , \sigma =10^{-2}$ \\\hline
Original MKSSC \eqref{prob:mkssc} & AManPL (Algorithm \ref{alg:MKSSC}) & $\lambda=5\times 10^{-3}$, $\rho=1$\\\hline
\end{tabular}
\caption{Algorithms and their parameters}\label{tab:param}
\end{center}
\end{table}


\subsection{UCI Datasets}
We first compare the clustering performance of different methods on three benchmark datasets in UCI machine learning repository \citep{Dua:2019}. We list the parameters used in the algorithms in Table \ref{tab:param}. {These parameters are either suggested in the corresponding papers, or are tuned from a set of values and the ones that yielded the best normalized mutual information (NMI) score (see Table \ref{tab:uci}) were chosen. For instance, $\lambda$ in CADMM was chosen from the set $\{10^{-3},10^{-4},10^{-5}\}$ and this was suggested in \citep{lu2016convex}. The values of $\lambda$ and $\rho$ in AMA+CADMM were suggested in \citep{park2018spectral}.} We follow \citep{park2018spectral} to construct the similarity matrices and  record the NMI scores to measure the performance of the clustering. Note that higher NMI scores indicate better clustering performance. The NMI scores are reported in Table \ref{tab:uci}. 
From Table \ref{tab:uci} we see that the three algorithms for SSC usually outperform SC, and among the three algorithms for SSC, ManPL usually outperforms the other two methods. We also see that the MKSSC model usually generates higher NMI scores than SSC. Moreover, among all three algorithms for SSC and three algorithms for MKSSC, the AManPL for MKSSC always has the largest NMI score. This indicates the great potential of our AManPL method for solving MKSSC. Morevoer, we show the heatmap of $|UU^\top|$ for the Wine data set in Figure \ref{fig:UCI-UUT}. From this figure we see that for SSC, the NADMM and ManPL generate much better results than CADMM in terms of recovering the block matrices, and for MKSSC, the AMA+NADMM and AManPL generate much better results than AMA+CADMM. The heatmaps for the other two data sets are similar, so we omit them for brevity.  


\begin{table}[ht]
\resizebox{\columnwidth}{!}{
\begin{tabular}{|c||c||ccc|ccc||c|}
\hline
  &  & \multicolumn{3}{c|}{SSC} & \multicolumn{3}{c|}{MKSSC} & \\\hline
Datasets      & SC     & CADMM   & NADMM  & ManPL  & AMA+CADMM & AMA+NADMM & AManPL & C \\
\hline
Wine  & 0.8650 & 0.8650 & 0.8782 & 0.8782 & 0.8854 & 0.8854 &\textbf{0.8926} & 3\\
Iris  & 0.7496 & 0.7582 & 0.7582 & 0.7582 & 0.7665 & 0.7601 &\textbf{0.7705} & 3\\
Glass & 0.3165 & 0.3418 & 0.2047 & 0.3471 & 0.2656 & 0.3315& \textbf{0.3644} & 6\\
\hline
\end{tabular}
}
\caption{Comparison of the NMI scores on the UCI data sets.}\label{tab:uci}
\end{table}

\begin{figure}[ht]
    \centering
    \subfloat[]{{\includegraphics[width=0.2\linewidth]{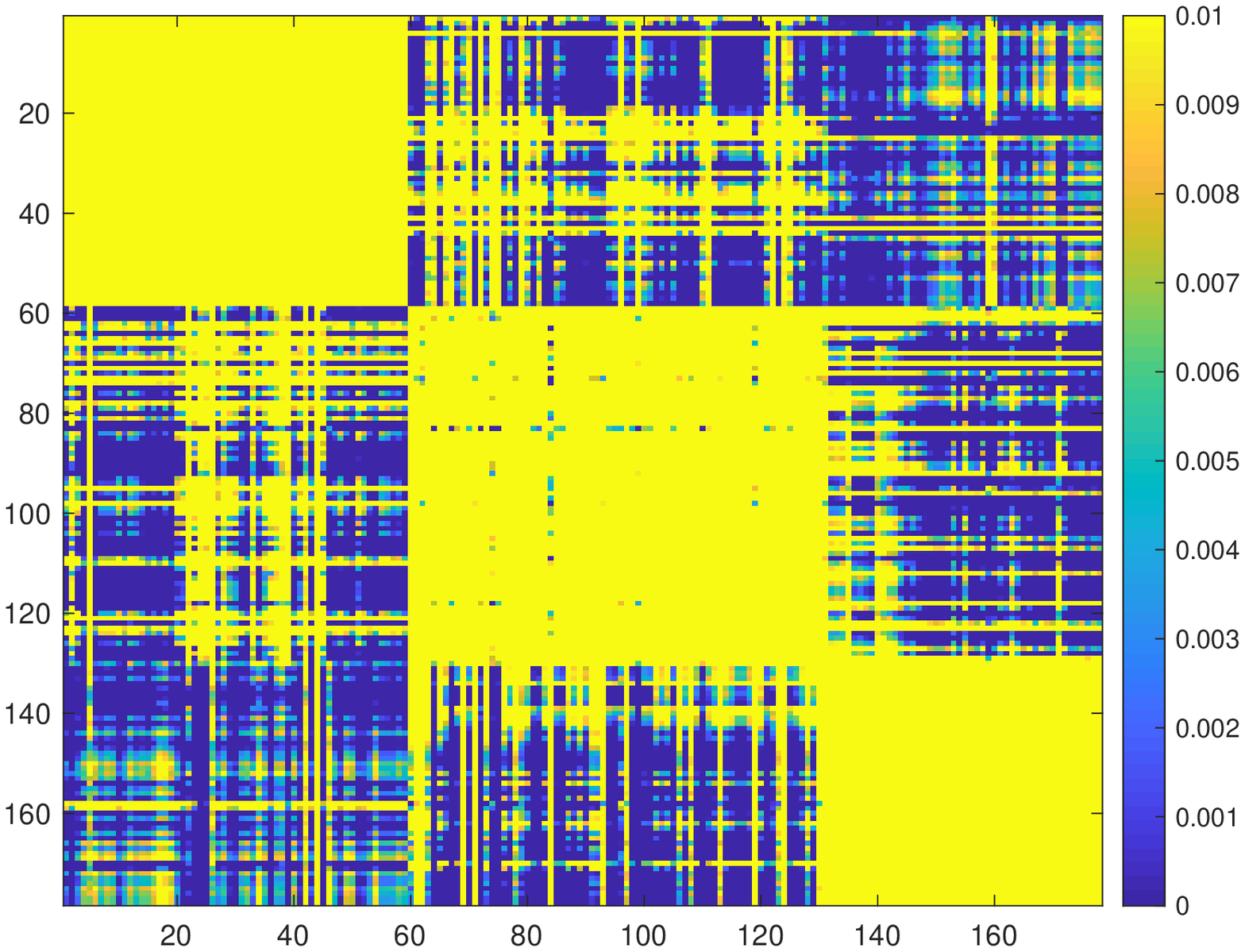} }}%
    \quad
    \subfloat[]{{\includegraphics[width=0.2\linewidth]{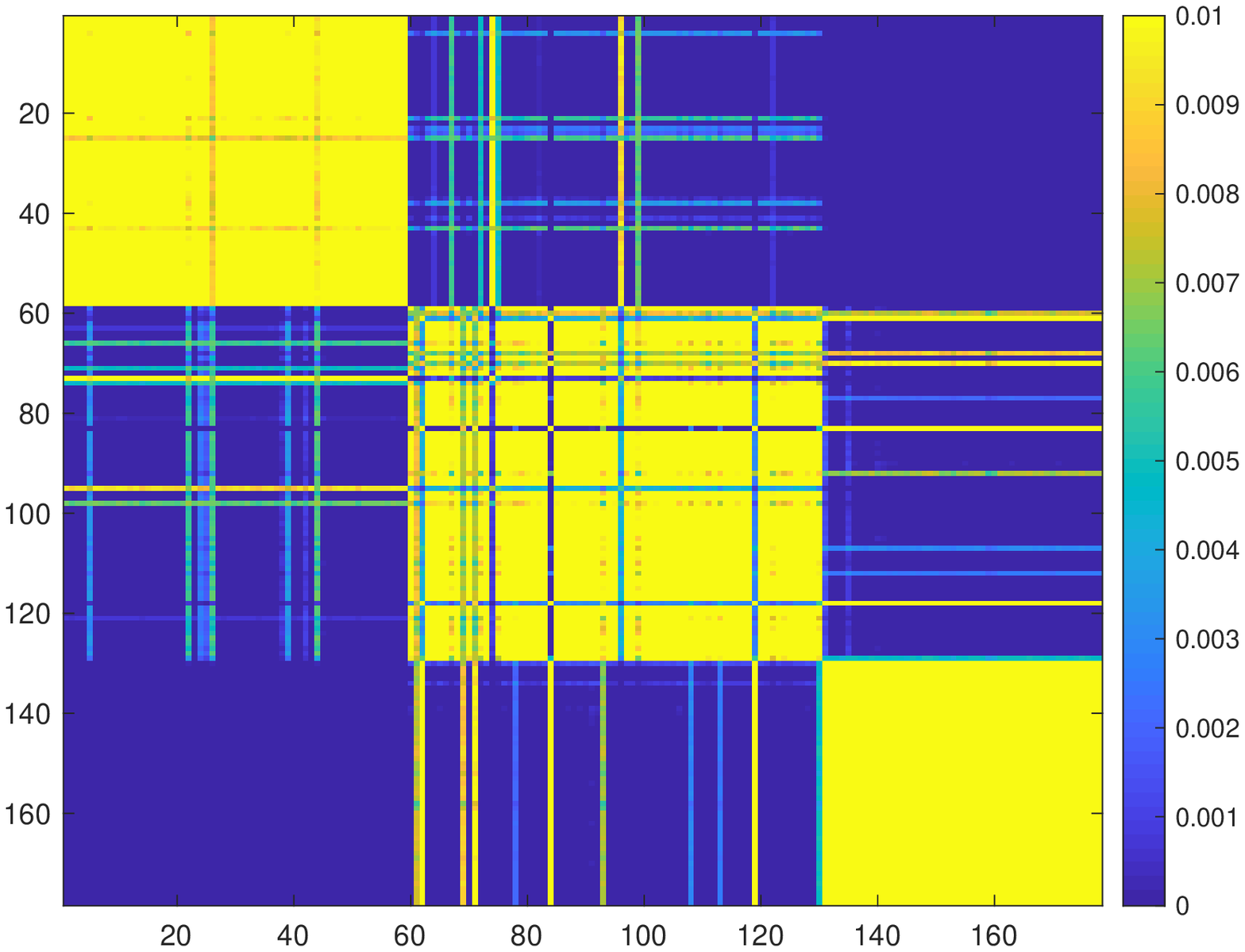} }}%
    \quad
    \subfloat[]{{\includegraphics[width=0.2\linewidth]{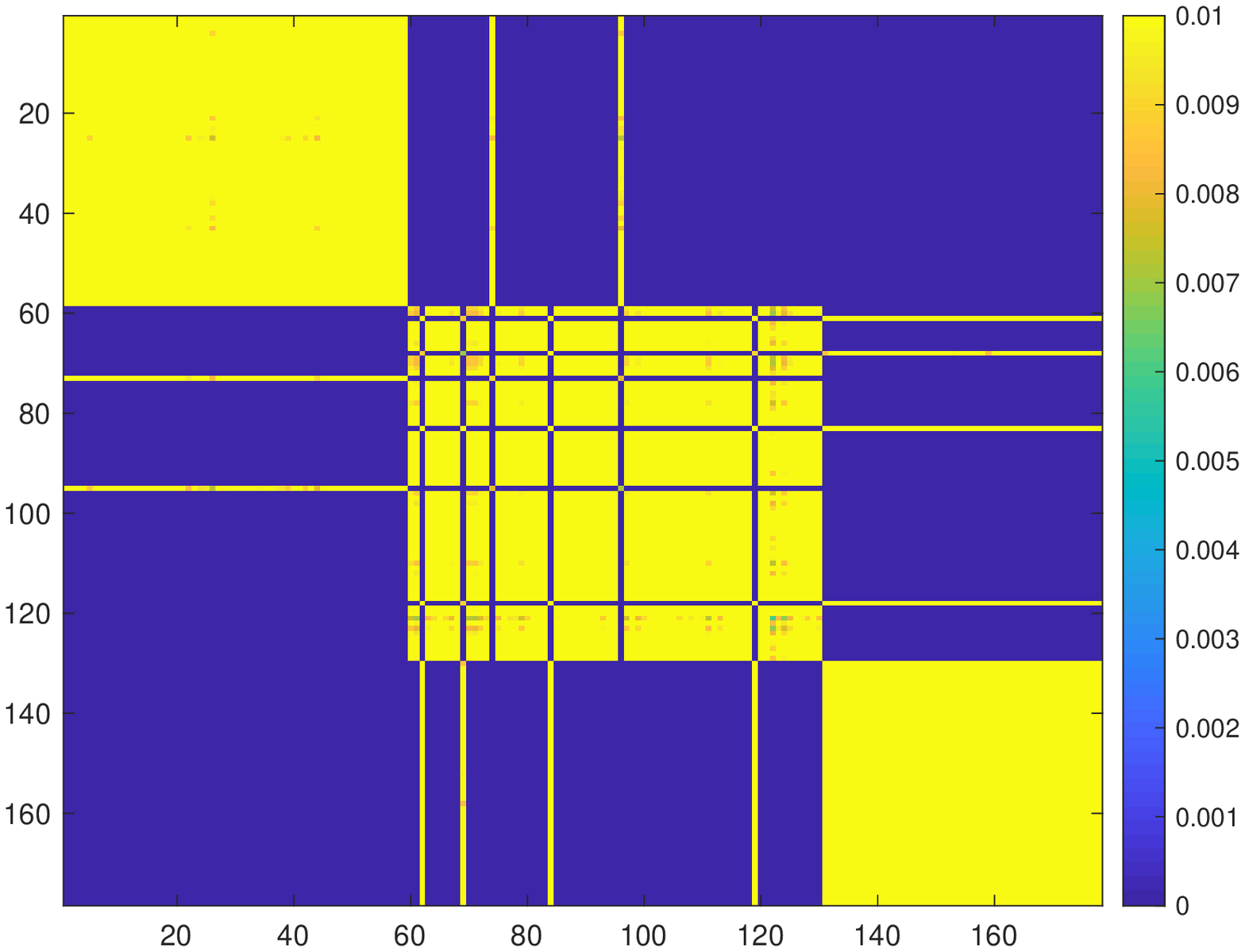} }}%
    \quad
    \subfloat[]{{\includegraphics[width=0.2\linewidth]{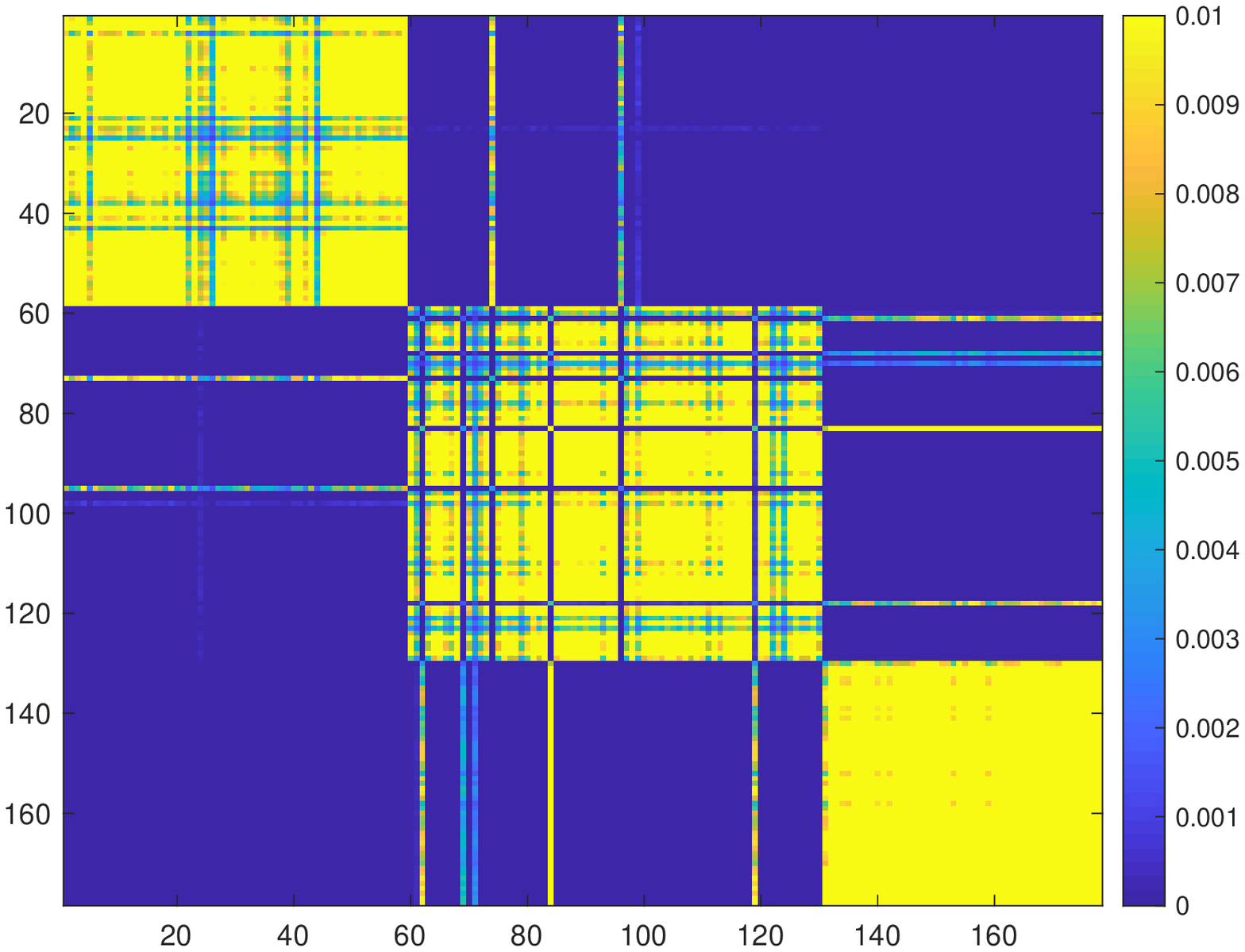} }}%
    \quad
    \subfloat[]{{\includegraphics[width=0.2\linewidth]{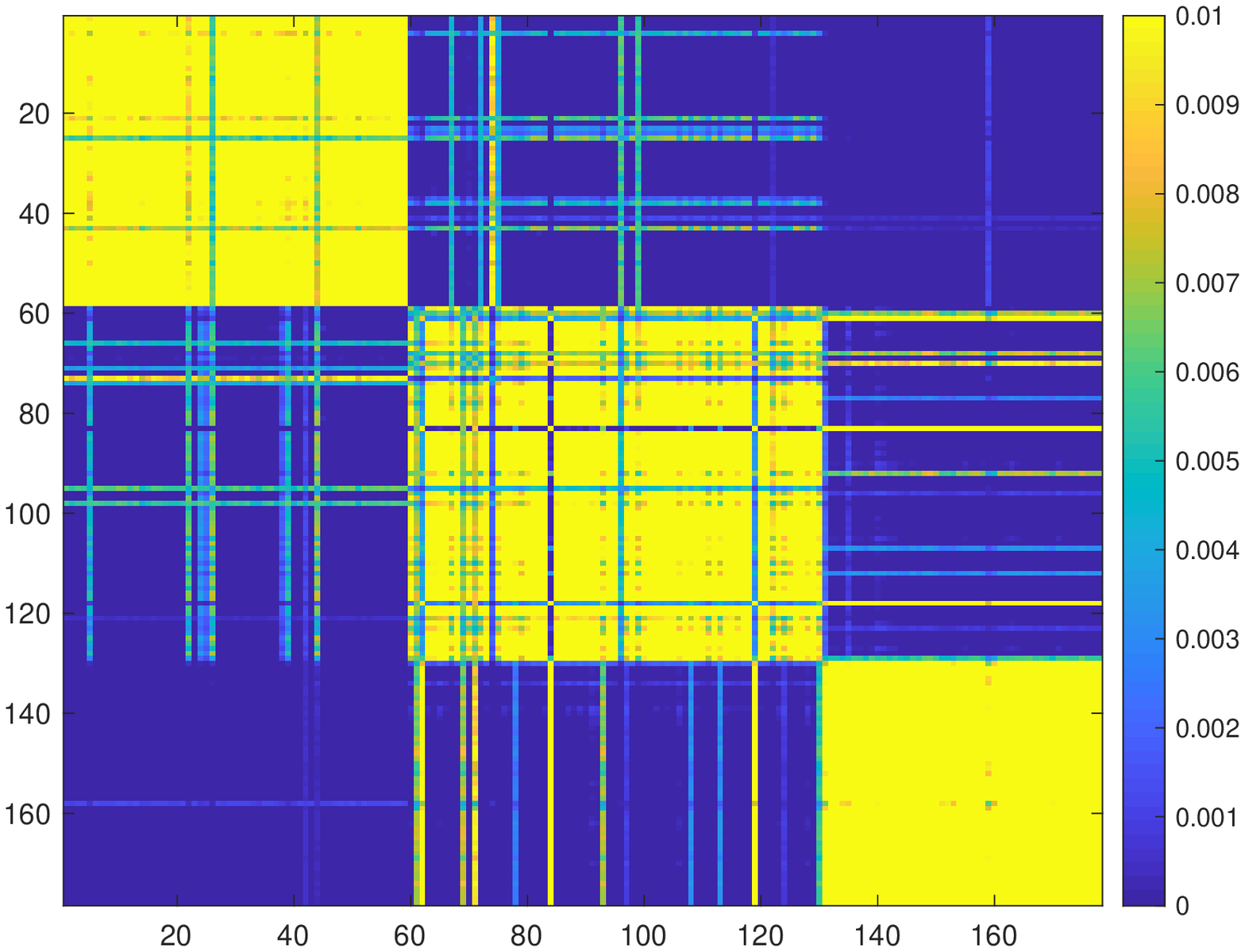} }}%
    \quad
    \subfloat[]{{\includegraphics[width=0.2\linewidth]{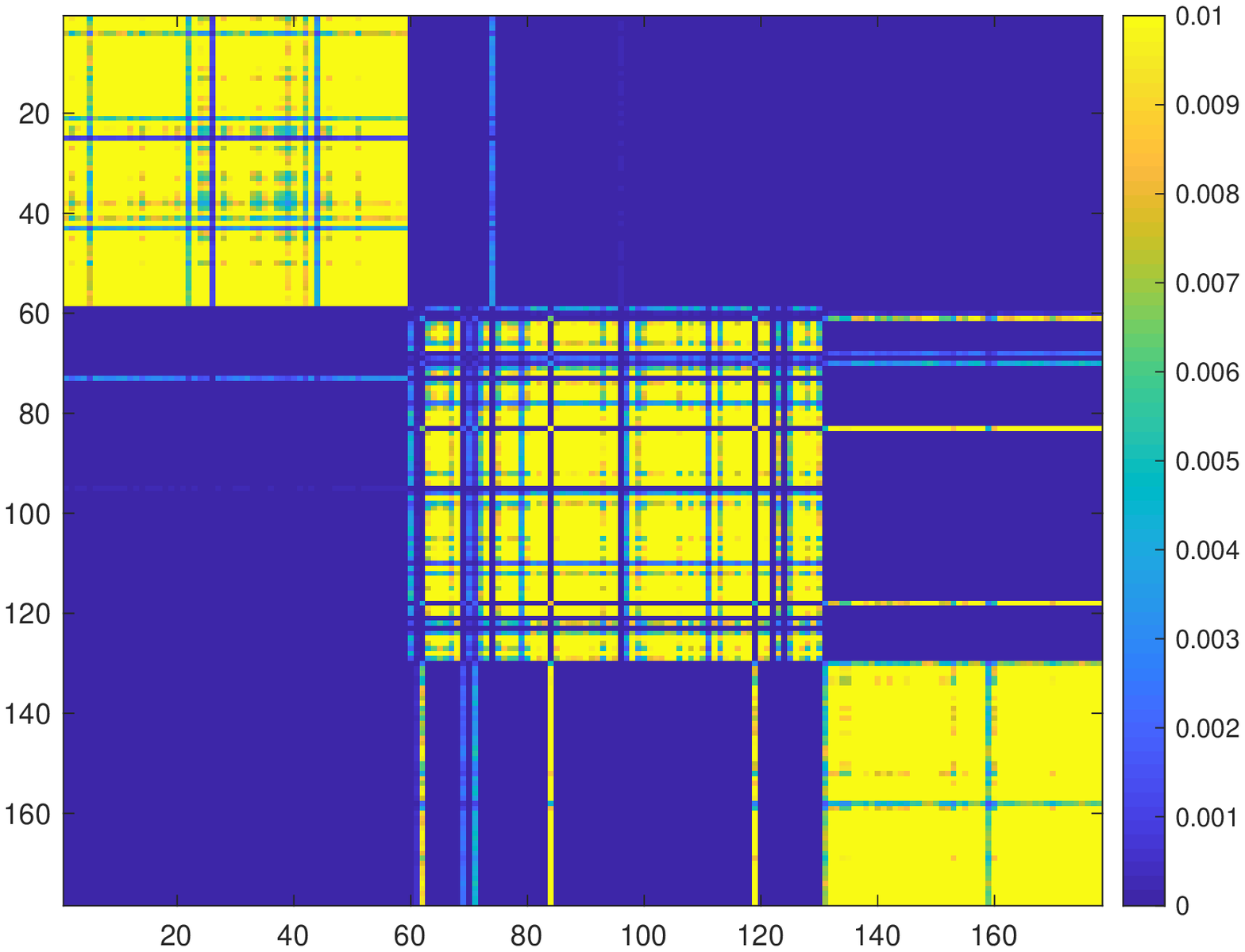} }}%
    \quad
    \subfloat[]{{\includegraphics[width=0.2\linewidth]{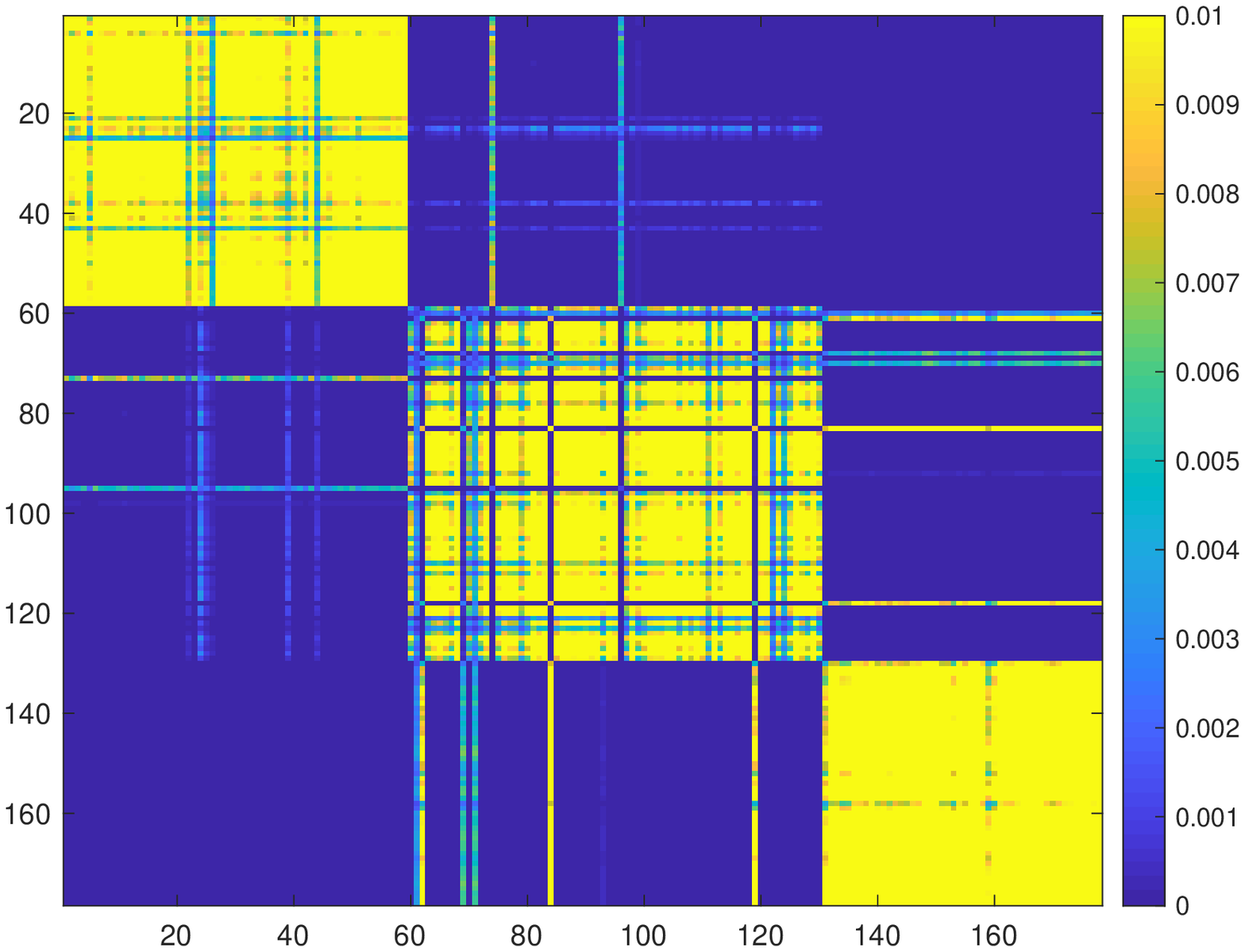} }}%
    \caption{The  heatmaps of $|\bU\bU^{\top}|$ on the Wine dataset estimated by (a) SC, (b) CADMM for SSC, (c) NADMM for SSC, (d) ManPL for SSC, (e) AMA+CADMM for MKSSC, (f) AMA+NADMM for MKSSC, and (g) AManPL for MKSSC. 
    }%
    \label{fig:UCI-UUT}%
\end{figure}

\subsection{Synthetic Data}
In this subsection, we follow \citep{park2018spectral} to evaluate the clustering performance of different methods on two synthetic datasets with $C=5$ clusters. {To compare ManPL, AManPL and existing methods, we select the regularization parameter $\lambda$ using an independent tuning dataset and set other parameters as in Table \ref{tab:param}. Specifically, we generate the tuning datasets using the same data generating process, and we select the parameter $\lambda$ that maximizes the average NMI score over 50 independent repetitions. The two synthetic datasets are generated as follows.}

\begin{itemize}
\item \textbf{Synthetic data 1.}  We randomly generate \(C\) points in the $2$-dimensional latent space spanning a  circle as the centers of \(C\) clusters. For each cluster, we randomly generate the points by adding  an independent noise to its center. We project these \(2\)-dimensional data to a $p$-dimensional space using a  linear projection matrix 
and then add the heterogeneous noise 
to obtain the data matrix \(X\). The noise level is $30\%$ of the radius of the circle in the embedded space.
\item \textbf{Synthetic data 2.}  We randomly generate a  matrix \(B^{\prime} \in \mathbb{R}^{C\times d}\) by drawing its entries independently from  Gaussian distributions, where \(d<p\) and different rows of $B^{\prime}$ specify heterogeneous variances. We randomly assign the cluster labels \(z_{1},\ldots, z_n \in[C]\). Let \(B=\left[B^{\prime}, 0_{C \times(p-d)}\right]\) and \(Z=(Z_{i j})_{n \times C}=(1_{\left\{z_{i}=j\right\}})_{n \times C}\).
We generate \(X=Z B+W,\) where \(W\) is a  noise matrix with independent standard normally distributed entries. The noise level is $20\%$ of the radius of the circle in the embedded space.
\end{itemize}

Figure \ref{fig:artificial example} visualizes one realization of the simulated data for these two settings. From Figure \ref{fig:artificial example} we see that different clusters mix together and the variability between clusters varies. Since we found that MKSSC is better to handle the heterogeneous noise than SSC in both settings, we only focus on the comparison of AMA+CADMM, AMA+NADMM and AManPL for MKSSC. In Table \ref{table_nmi}, we report the NMI scores of the three algorithms for solving MKSSC. The NMI scores are averaged over 50 independent runs and the numbers in the parentheses are the standard deviation of the NMI scores. From Table \ref{table_nmi} we see that AManPL consistently outperforms the other two methods in all tested instances. This is not surprising because AManPL solves the original MKSSC, and the other two methods only solve its approximations. Moreover, we show the heatmaps of $|UU^{\top}|$ for synthetic data 1 in Figure \ref{figure_s1} (synthetic data 2 is simliar). We see that the block diagonal structure is well recovered by AManPL, which shows much better performance than the other two methods. 

\begin{figure}[ht]
    \centering
    \subfloat[Synthetic data 1]{{\includegraphics[width=0.4\linewidth]{./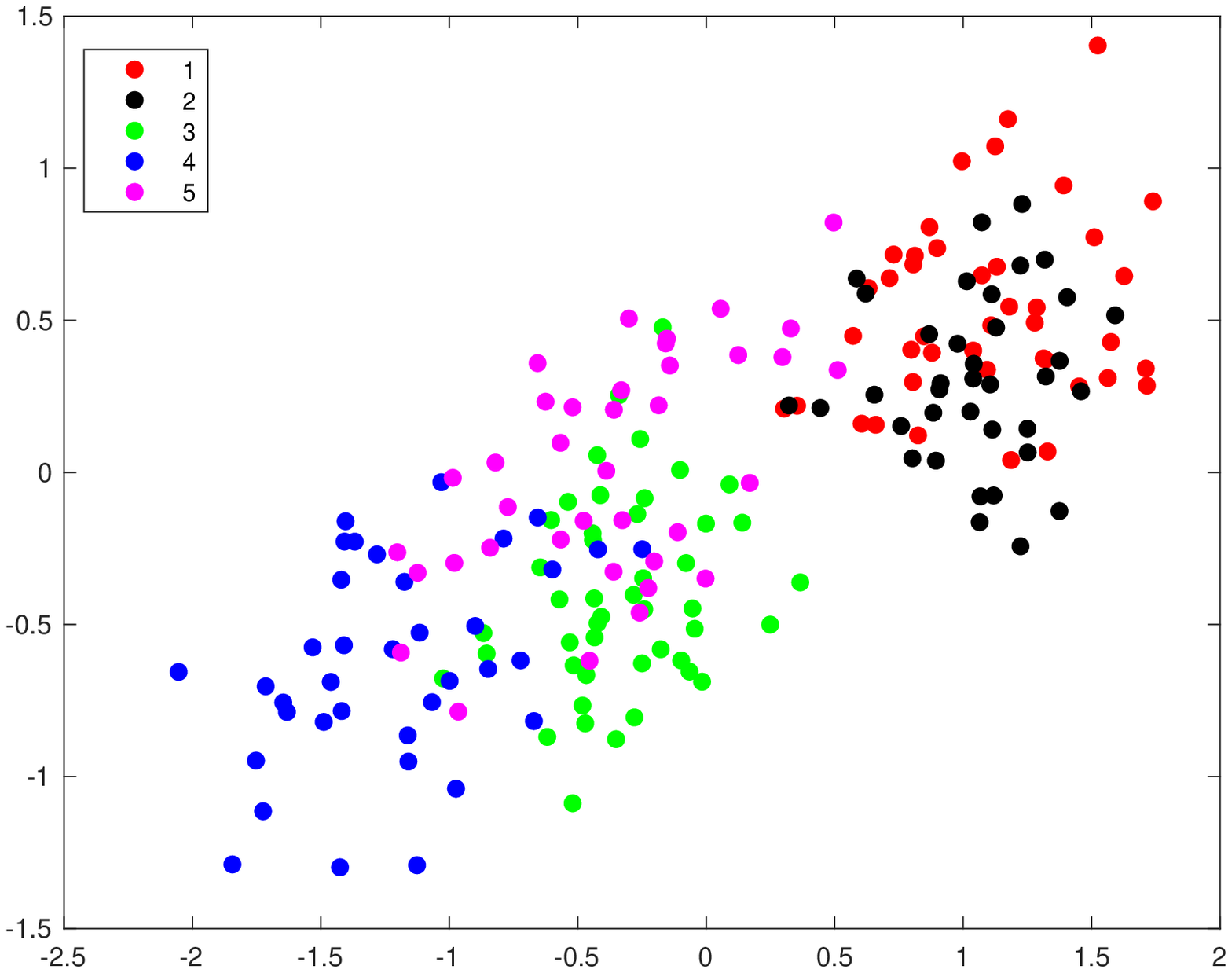} }}%
    \qquad
    \subfloat[Synthetic data 2]{{\includegraphics[width=0.4\linewidth]{./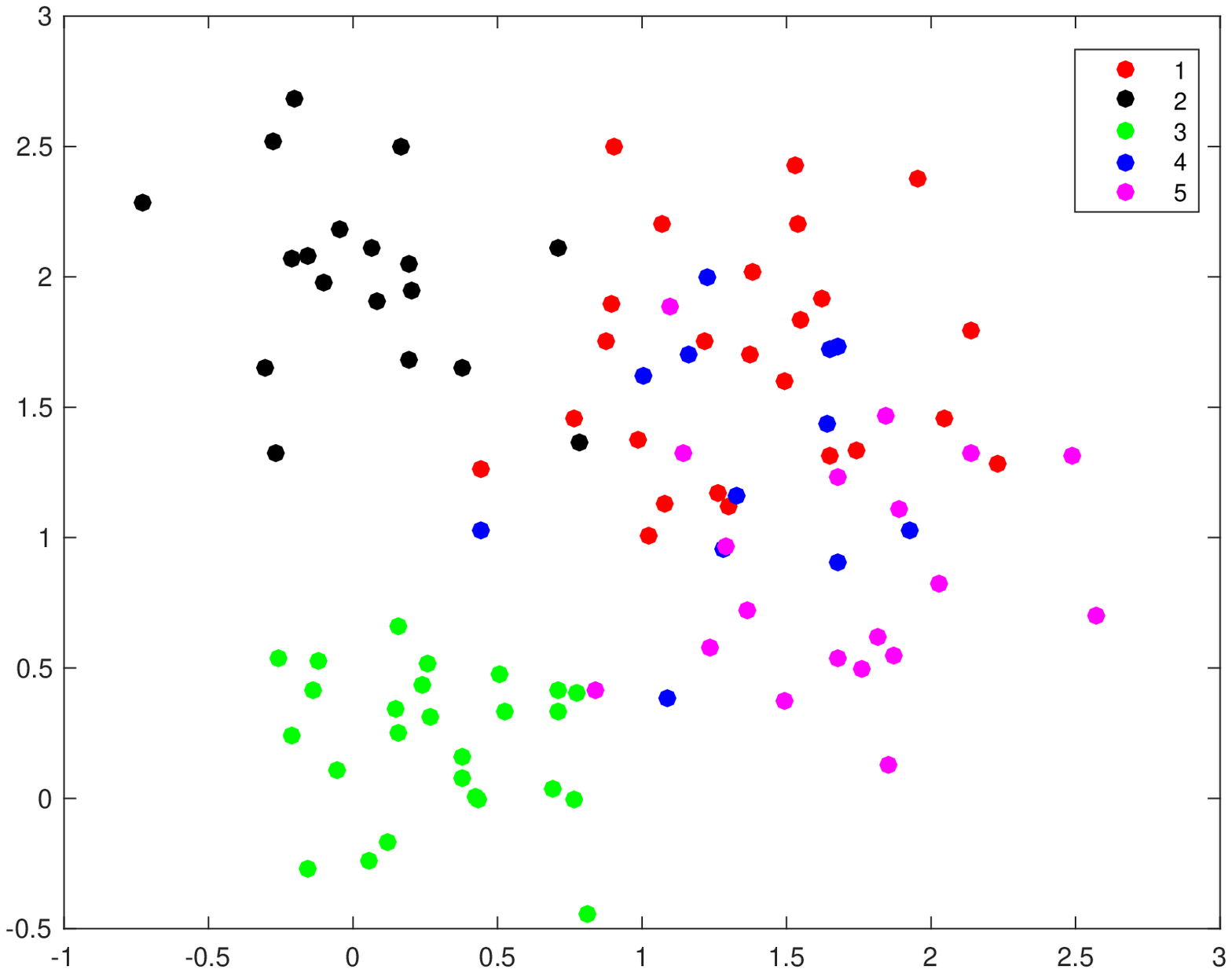} }}%
    \caption{Illustration of one realization of the synthetic data.}%
    \label{fig:artificial example}%
\end{figure}


\begin{table}[ht]
\centering
\begin{tabular}{|cc|l|l|l|}
\hline
 \multicolumn{2}{|c|}{Method}    & AMA+CADMM  & AMA+NADMM & AManPL \\
\hline
\hline
$n$   & $p$ & \multicolumn{3}{|c|}{Synthetic data 1}\\
\hline
100 & 250 & 0.9852 (1.2e-3) & 0.9852 (2.4e-3) & 0.9933 (1.7e-4) \\
100 & 300 & 0.9789 (2.1e-2) & 0.9844 (1.4e-2) & 0.9917 (7.9e-4)  \\
100 & 500 & 0.9787 (2.0e-3) & 0.9834 (1.5e-3) & 0.9956 (1.2e-4)  \\
200 & 250 & 0.9821 (1.8e-3) & 0.9803 (1.9e-3) & 0.9955 (6.6e-5)  \\
200 & 300 & 0.9830 (1.3e-3) & 0.9833 (1.3e-3) & 0.9844 (1.6e-4) \\
200 & 500 & 0.9607 (2.8e-3) & 0.9606 (2.8e-3) & 0.9867 (3.6e-4) \\
\hline
\hline
$n$   & $p$ & \multicolumn{3}{|c|}{Synthetic data 2}\\
\hline
100 & 250 & 0.6491 (5.8e-3) & 0.7163 (3.9e-3) & 0.7334 (7.6e-3)  \\
100 & 300 & 0.6304 (1.3e-3) & 0.7466 (2.4e-3) & 0.7881 (1.5e-3)  \\
100 & 500 & 0.6253 (2.3e-3) & 0.7289 (1.3e-3) & 0.7308 (1.1e-3)  \\
200 & 250 & 0.7977 (2.1e-3) & 0.1371 (2.2e-3) & 0.9182 (3.2e-4) \\
200 & 300 & 0.7380 (1.1e-3) & 0.1034 (1.2e-3) & 0.8773 (1.2e-3)  \\
200 & 500 & 0.7130 (9.6e-3) & 0.1220 (2.1e-3) & 0.8199 (4.5e-3) \\
\hline
\end{tabular}
\caption{NMI of three algorithms for solving MKSSC for synthetic data.}\label{table_nmi}
\end{table}


\begin{figure*}[ht]
\centering
    \subfloat[]{{\includegraphics[width=0.3\linewidth]{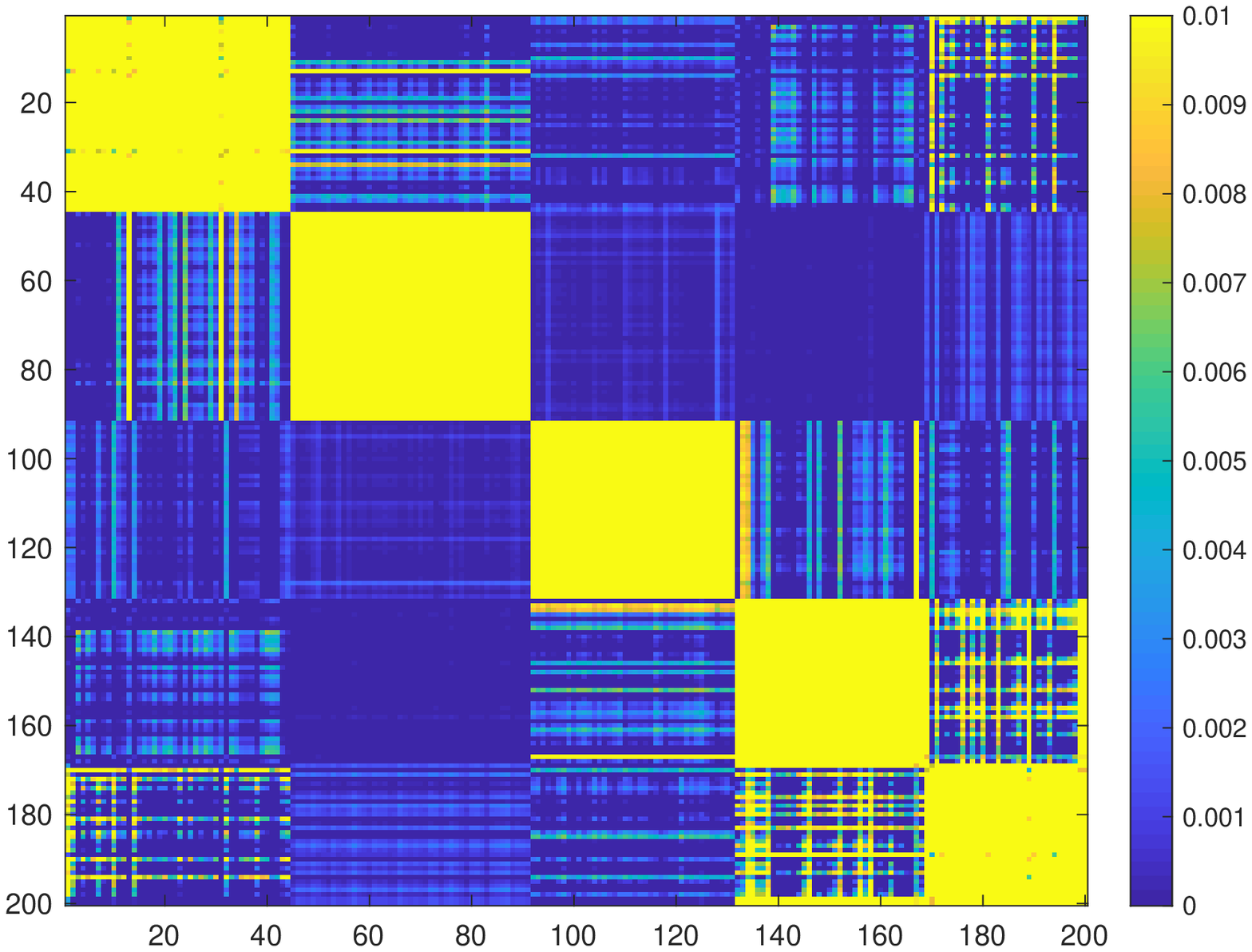} }}%
    \quad
    \subfloat[]{{\includegraphics[width=0.3\linewidth]{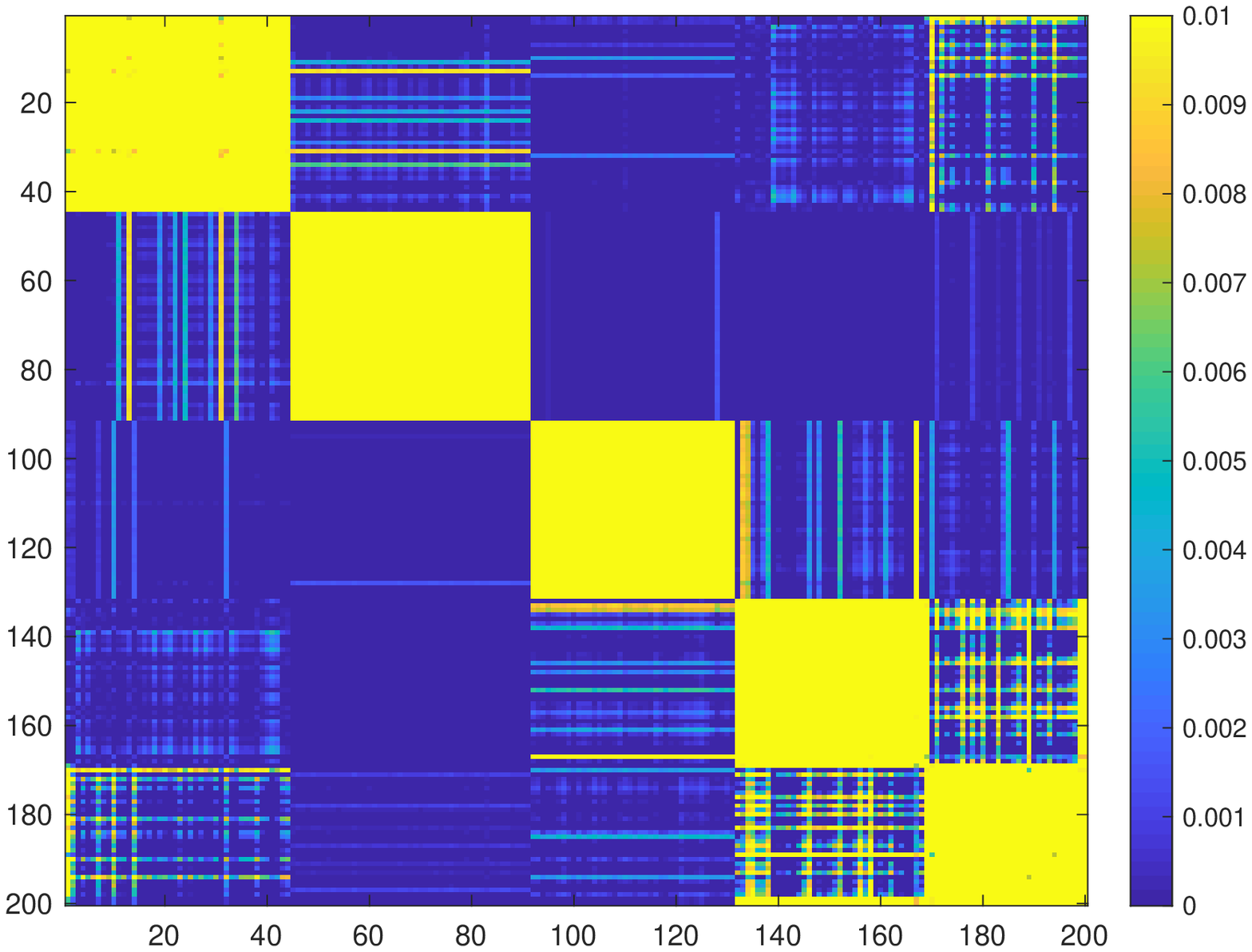} }}%
    \quad
    \subfloat[]{{\includegraphics[width=0.3\linewidth]{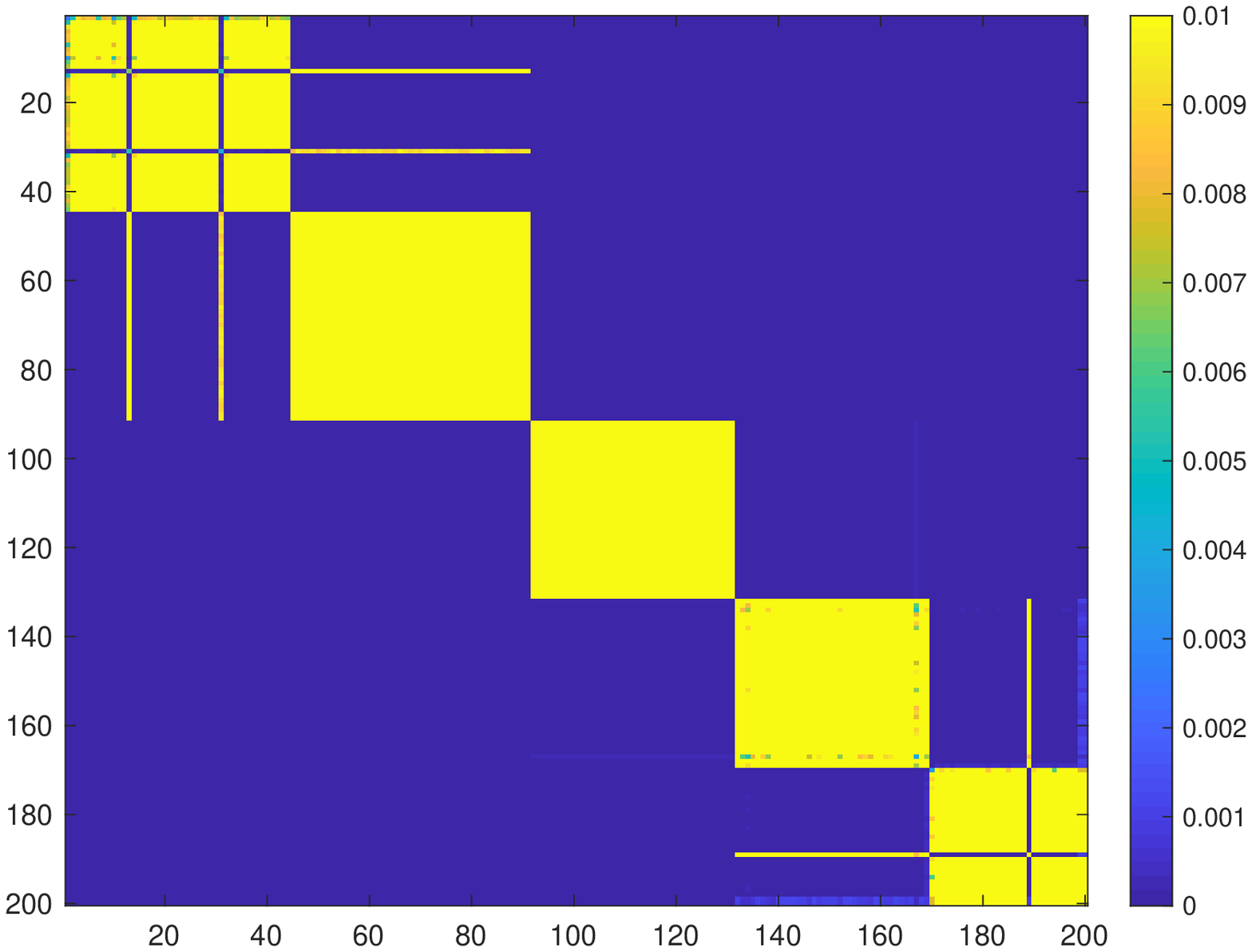} }}%
\caption{The heatmaps of $|\bU\bU^{\top}|$ on synthetic data 1 estimated by (a)  AMA+CADMM, (b) AMA+NADMM, and (c) AManPL for MKSSC.}
\label{figure_s1}
\end{figure*}

\subsection{Single-Cell RNA Sequencing Data Analysis}\label{real-application}

Clustering cells and identifying subgroups are important topics in high-dimensional scRNA-seq data analysis. The multiple kernel learning approach is vital as clustering scRNA-seq data is usually sensitive to the choice of the number of neighbors and scaling parameter. Recently, \citep{park2018spectral} showed  that AMA+CADMM for MKSSC provides a promising clustering result and outperforms several state-of-art methods such as SC, SSC, t-SNE \citep{maaten2008visualizing}, and SIMLR \citep{wang2017simlr}. In what follows, we focus on the numerical comparison of AMA+CADMM, AMA+NADMM and AManPL to cluster high-dimensional scRNA-seq data on six real datasets. These six real datasets represent several types of important dynamic processes such as cell differentiation, and they include the information about single cell types. We follow the procedure of \citep{park2018spectral} to specify multiple kernels for clustering scRNA-seq data and choose the proper tuning parameters $\lambda$ and $\rho$ from a set of given values so that the best NMI scores were obtained in each case. The six datasets and the NMI scores of the three algorithms: AMA+CADMM, AMA+NADMM and AManPL for solving MKSSC, are summarized in Table \ref{tab: real date results}. From Table \ref{tab: real date results} we observe that AManPL always achieves the highest NMI scores and consistently outperforms the other two methods on all six real datasets. We also demonstrate the performance of AManPL by showing the two-dimensional embedding estimated by AManPL in Figure \ref{fig:single cell clustering}. From this figure, we see that AManPL yielded clear and meaningful clusters, even for the Ginhous dataset \citep{schlitzer2015identification}, which was known to be very challenging. This again demonstrates the great practical potential of our AManPL method for analyzing the scRNA-seq data. 

\begin{table}[ht]
\centering
\begin{tabular}{|l||ccc||c|}
\hline
 Datasets   & AMA+CADMM  & AMA+NADMM & AManPL & $C$ \\
 \hline
Deng \citep{deng2014single}  & 0.7319 & 0.7389 & \textbf{0.7464}  & 7        \\
Ting \citep{ting2014single}   & 0.9283 & 0.9524 & \textbf{0.9755}  & 5        \\
Treutlein \citep{treutlein2014reconstructing} & 0.7674 & 0.7229 & \textbf{0.8817}  & 5        \\
Buettner \citep{buettner2015computational} & 0.7929 & 0.8744 & \textbf{0.8997}  & 3        \\
Ginhoux \citep{schlitzer2015identification}  & 0.6206 & 0.6398 & \textbf{0.6560}  & 3       \\
Pollen \citep{pollen2014low} & 0.9439 & 0.9372 & \textbf{0.9631} & 11       \\
\hline
\end{tabular}
\caption{NMI scores of three algorithms for MKSSC on six real scRNA-seq datasets.}\label{tab: real date results}
\end{table}

\begin{figure}[ht]
    \centering
    \subfloat[Ting \citep{ting2014single}]{{\includegraphics[width=0.4\linewidth]{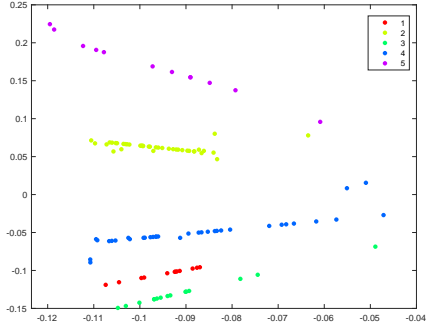} }}%
    \qquad
    \subfloat[Ginhoux \citep{schlitzer2015identification}]{{\includegraphics[width=0.4\linewidth]{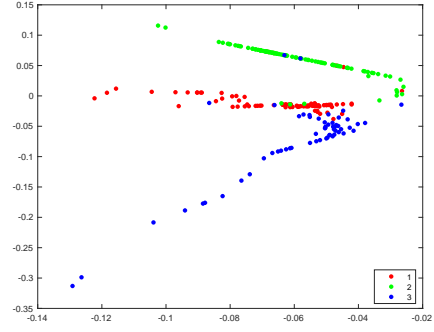} }}%
	\caption{Visualization of the cells in 2-D embedded space using AManPL for MKSSC on two real datasets in high-dimensional scRNA-seq analysis.}\label{fig:single cell clustering}%
\end{figure}

\section{Conclusion}\label{conclusion}

   Motivated by the recent need on analyzing single cell RNA sequencing data, we considered the sparse spectral clustering and multiple-kernel sparse spectral clustering in this paper. We proposed a manifold proximal linear method for solving SSC, and the alternating manifold proximal linear method for solving MKSSC. Convergence and iteration complexity of the proposed methods are analyzed. Numerical results on synthetic data and the single cell RNA sequencing data demonstrated the great potential of our proposed methods. 
   


\appendix
\renewcommand{\thesection}{\Alph{section}}
\setcounter{section}{0}

\section{Preliminaries}\label{sec:appen-pre}
 \subsection{Optimality Conditions of Manifold Optimization}
\begin{definition}(Generalized Clarke subdifferential \citep{hosseini2011generalized})
	For a locally Lipschitz function $F$ on $\M$, the Riemannian generalized directional derivative of $F$ at $X\in\M$ in direction $V$ is defined by
	\be\label{r_dir_derivative}
	F^{\circ}(X,V) =\limsup\limits_{Y\rightarrow X,t\downarrow 0}\frac{F\circ \phi^{-1}(\phi(Y)+tD\phi(X )[V])- {F}\circ \phi^{-1}(\phi(Y))}{t},
	\ee
	where $(\phi,U)$ is a coordinate chart at $X$ and $D\phi(X)$ denotes the Jacobian  of $\phi(X)$.
	The generalized gradient or the Clarke subdifferential of $F$ at $X\in\M$, denoted by ${\partial}_R F(X)$, is given by
	\be\label{r_clarke_sub}
	 {\partial}_R F(X)=\{\xi\in \T_X\M :\inp{\xi}{V}\leq F^{\circ}(X,V), \ \forall V\in \T_X\M \}.
	\ee
\end{definition}
\begin{definition}(\citep{yang2014optimality})
	A function $f$ is said to be regular at $X\in\M$ along $\T_X\M$ if
	\begin{itemize}
		\item for all $V\in \T_X\M$, $f'(X;V)=\lim_{t\downarrow 0} \frac{f(X+tV)-f(X)}{t}$ exists, and
		\item for all $V\in \T_X\M$, $f'(X;V) = f^\circ (X;V)$.
	\end{itemize}
\end{definition}

For a smooth function $f$ over Riemannian submanifold, if the metric on the manifold is induced by the Euclidean inner product in the ambient space, then we know that $\grad f(X)= \Proj_{\T_X\M} \nabla f(X)$. Here $\grad f$ denotes the Riemannian gradient of $f$, and $\Proj_{\T_X\M}$ denotes the projection onto ${\T_X\M}$. According to Lemma 5.1 in \citep{yang2014optimality}, for a regular function 
$F$, we have ${\partial}_R F(X)=\Proj_{\T_X\M}(\partial F(X)).$ Moreover, the function $h(c(U))$  in problem \eqref{SSC-manopt-rewrite} is weakly convex and thus is regular according to Lemma 5.1 in \citep{yang2014optimality}. 
 By Theorem 4.1 in \citep{yang2014optimality}, the first-order optimality condition of problem \eqref{SSC-manopt-rewrite} is given by
\be\label{opt-cond}0 \in \Proj_{\T_{U}\M}{\nabla c(U)}^\top\partial h(c(U)) + \grad f(U).\ee

\begin{definition}\label{stationary_point}
	A point $U\in\M$ is called a stationary point of problem \eqref{SSC-manopt-rewrite} if it satisfies the first-order optimality condition \eqref{opt-cond}. 
\end{definition}
\begin{definition}
    A retraction on a differentiable manifold $\mathcal{M}$ is a smooth mapping Retr from the tangent bundle $\T\mathcal{M}$ onto $\mathcal{M}$ satisfying the following two conditions (here ${\Retr}_{X}$ denotes the restriction of $ {\Retr}$ onto $\T_X\mathcal{M}$)
    \begin{itemize}
        \item $ {\Retr}_X(0) = X$, $\forall X \in \mathcal{M}$, where $0$ denotes the zero element of $\T_X \mathcal{M}$.
        \item For $X \in \mathcal{M}$, it holds that \[
        \lim_{\text{T}_X \mathcal{M}\ni\xi\to 0} \frac{\| {\Retr}_X(\xi) - (X + \xi)\|_F}{\|\xi\|_F} = 0.
        \]
    \end{itemize}
    The retraction onto the Euclidean space is simply the indentity mapping: ${\Retr}_X(\xi) = X + \xi$. Common retractions include the polar decomposition:
    \[
    \Retr_X^{\text{polar}}(\xi) = (X + \xi) (I_r + \xi^{\top} \xi)^{-1/2},
    \]
    the QR decomposition:
    \[
    \Retr_X^{\text{QR}} = \textbf{qf}(X + \xi),
    \]
    where $\textbf{qf}(A)$ is the $Q$ factor of the QR factorization of $A$, and the Cayley transformation:
    \[
    \text{Retr}_{X}^{\text{cayley}}(\xi) = \Bigl(I_n - \frac{1}{2}W(\xi)\Bigr)^{-1}\Bigl(I_n + \frac{1}{2} W(\xi)\Bigr)X,
    \]
    where $W(\xi) = (I_n - \frac{1}{2}XX^{\top})\xi X^{\top} - X\xi^{\top} (I_n - \frac{1}{2}XX^{\top})$
    \end{definition}
    
    The following lemma is useful in our analysis. 
\begin{lemma}\label{lem:two-inequalities}\citep{boumal2018global,chen2018proximal}
For all $X \in  { \mathcal { M } }$ and $\xi \in \T_X\cM$ there exist constants $M _ { 1 } > 0$ and $M _ { 2 } > 0$ such that the following two inequalities hold:
\begin{subequations}\label{apd:retrlemma}
\begin{align}\label{apd:Retrbound1}
\left\| \operatorname { Retr } _ { X } ( \xi ) - X \right\| _ { F } \leq M _ { 1 } \| \xi \| _ { F } , \forall X \in   { \mathcal { M } } , \xi\in \T_X\cM \\
\left\| \operatorname { Retr } _ { X } ( \xi ) - ( X + \xi ) \right\| _ { F } \leq M _ { 2 } \| \xi \| _ { F } ^ { 2 } , \forall X \in   { \mathcal { M } } , \xi\in \T_X\cM.
\label{apd:Retrbound2}
\end{align}
\end{subequations}
\end{lemma}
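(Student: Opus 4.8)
The plan is to exploit two structural facts that are special to this setting: the Stiefel manifold $\M$ is compact, so every $U\in\M$ obeys $\|U\|_F=\sqrt{C}$ and any continuous function on a bounded slice of its tangent bundle attains a finite maximum; and the retraction is a smooth mapping with $\Retr_X(0)=X$ whose differential at the origin is the identity. The latter is exactly the second defining property in the Definition of retraction, namely $\|\Retr_X(\xi)-(X+\xi)\|_F=o(\|\xi\|_F)$ as $\xi\to 0$, which combined with differentiability forces $D\Retr_X(0)=\mathrm{id}$. I would fix a radius $r>0$ and split the argument into a near-origin regime $\|\xi\|_F\le r$ and a far regime $\|\xi\|_F>r$, then produce $M_1,M_2$ by taking maxima of the constants obtained in the two regimes.

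First I would treat the near-origin regime. Consider $\mathcal B_r=\{(X,\xi):X\in\M,\ \xi\in\T_X\M,\ \|\xi\|_F\le r\}$, a closed bounded subset of the tangent bundle, hence compact. Since $\Retr$ is smooth, the residual $g(X,\xi):=\Retr_X(\xi)-(X+\xi)$ is $C^2$ near $\mathcal B_r$ and, for each fixed $X$, satisfies $g(X,0)=0$ and $D_\xi g(X,0)=D\Retr_X(0)-\mathrm{id}=0$. A second-order Taylor expansion in $\xi$ then gives $\|g(X,\xi)\|_F\le\tfrac12 c\,\|\xi\|_F^2$, where $c:=\sup_{\mathcal B_r}\|D_\xi^2\Retr\|$ is finite by compactness of $\mathcal B_r$ and continuity of the second derivative. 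This yields \eqref{apd:Retrbound2} on the near regime with constant $\tfrac12 c$; and since $\Retr_X(\xi)-X=\xi+g(X,\xi)$ and $\|\xi\|_F^2\le r\|\xi\|_F$, it yields \eqref{apd:Retrbound1} with constant $1+\tfrac12 c\,r$.

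Next I would dispatch the far regime $\|\xi\|_F>r$ using only boundedness of $\M$. Since $\Retr_X(\xi)\in\M$ and $X\in\M$ both have Frobenius norm $\sqrt C$, the triangle inequality gives $\|\Retr_X(\xi)-X\|_F\le 2\sqrt C<(2\sqrt C/r)\|\xi\|_F$ and $\|\Retr_X(\xi)-(X+\xi)\|_F\le 2\sqrt C+\|\xi\|_F$. Because $\|\xi\|_F>r$ forces $\|\xi\|_F<\|\xi\|_F^2/r$ and $1<\|\xi\|_F^2/r^2$, the last expression is at most $(2\sqrt C/r^2+1/r)\|\xi\|_F^2$. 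Setting $M_1=\max\{1+\tfrac12 c\,r,\ 2\sqrt C/r\}$ and $M_2=\max\{\tfrac12 c,\ 2\sqrt C/r^2+1/r\}$ then delivers both inequalities for all $\xi\in\T_X\M$ and all $X\in\M$.

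The main obstacle is establishing that the Taylor-remainder constant $c$ can be chosen independently of the base point $X$; this is precisely where compactness of $\M$ enters, guaranteeing that the supremum of the continuous second derivative over the compact slice $\mathcal B_r$ is finite. Everything else — the first-order identity $D\Retr_X(0)=\mathrm{id}$ read off from the defining property, and the crude far-regime estimates from $\|U\|_F=\sqrt C$ — is elementary. One subtlety worth flagging is the smooth-dependence-on-$X$ step: to invoke a single uniform second-derivative bound, I would view $(X,\xi)\mapsto\Retr_X(\xi)$ as a smooth map on the tangent bundle embedded in $\br^{n\times C}\times\br^{n\times C}$ rather than expanding chart by chart, so that the compactness argument applies globally and the constants are genuinely uniform.
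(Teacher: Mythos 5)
Your proof is correct. The paper itself gives no proof of this lemma --- it is stated by citation to \citep{boumal2018global,chen2018proximal} --- and your two-regime argument (a second-order Taylor bound with a uniform derivative constant over the compact set $\{(X,\xi): X\in\M,\ \xi\in\T_X\M,\ \|\xi\|_F\le r\}$, using $\Retr_X(0)=X$ and $D\Retr_X(0)=\mathrm{id}$ from the defining properties, combined with the crude bounds from $\|U\|_F=\sqrt{C}$ when $\|\xi\|_F>r$) is exactly the standard proof in the cited reference, so it matches the intended argument.
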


\begin{lemma}\label{lem:lipschitz_h_c}
The function $h$ in \eqref{SSC-manopt-rewrite} is $L_h$-Lipschitz continuous and the Jacobian $\nabla c(x)$ is $L_c$-Lipschitz continuous, where $L_h=n\lambda$ and $L_c=2$. 
\end{lemma}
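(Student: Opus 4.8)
The statement has two independent parts, so I would handle them separately. The key observation is that both $h$ and $c$ are explicit, simple functions, so the proof is a direct computation rather than an abstract argument — there is no genuine obstacle, only the bookkeeping of getting the constants $L_h = n\lambda$ and $L_c = 2$ exactly right.

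\textbf{Lipschitz continuity of $h$.} Recall that $h(\cdot) = \lambda\|\cdot\|_1$ acts on $n\times n$ matrices $Z = UU^\top$. For two such matrices $Z_1, Z_2$, the reverse triangle inequality gives
\[
|h(Z_1) - h(Z_2)| = \lambda\,\big|\,\normone{Z_1} - \normone{Z_2}\,\big| \leq \lambda\,\normone{Z_1 - Z_2}.
\]
To express this in terms of the Frobenius norm used throughout the paper, I would invoke the standard norm inequality $\normone{Z} \leq n\,\normfro{Z}$ for an $n\times n$ matrix, which follows from Cauchy--Schwarz applied to the $n^2$ entries (so the factor is $\sqrt{n^2}=n$). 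Combining these yields
\[
|h(Z_1) - h(Z_2)| \leq \lambda n \,\normfro{Z_1 - Z_2},
\]
which identifies $L_h = n\lambda$.

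\textbf{Lipschitz continuity of the Jacobian $\nabla c$.} Here $c(U) = UU^\top$, and the relevant quantity is the Lipschitz constant of the map $U \mapsto \nabla c(U)$ (equivalently, of the linear operator $J(U):V \mapsto UV^\top + VU^\top$ viewed as depending on $U$). The cleanest route is to note that $\nabla c$ is itself an affine-linear function of $U$: for any direction $V$, the directional derivative is $DJ(U)[W]$ acting as $V \mapsto WV^\top + VW^\top$, which does not depend on $U$ at all. Concretely, for two points $U_1, U_2$ and any fixed $V$,
\[
\big\|\,J(U_1)V - J(U_2)V\,\big\|_F = \big\|\,(U_1 - U_2)V^\top + V(U_1 - U_2)^\top\,\big\|_F \leq 2\,\normfro{U_1 - U_2}\,\normfro{V},
\]
using the triangle inequality and submultiplicativity of the Frobenius norm. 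Taking the supremum over $\normfro{V}\leq 1$ gives the operator-norm bound $\normop{J(U_1) - J(U_2)} \leq 2\,\normfro{U_1 - U_2}$, hence $L_c = 2$.

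\textbf{Expected main subtlety.} The only point requiring care is the choice of matrix norm in the definition of $L_h$: the paper works with $\normfro{\cdot}$ on the manifold/tangent space, so I must make the passage from the intrinsic $\ell_1$ estimate to a Frobenius estimate explicit, since that is where the dimension-dependent factor $n$ enters. For $L_c$ the factor $2$ is simply the two symmetric terms in the product rule and is dimension-free. Neither step presents a real difficulty; the lemma is essentially a recording of constants needed quantitatively in the convergence analysis of Theorem~\ref{thm:manpl-convergence}.
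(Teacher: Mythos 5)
Your proposal is correct and follows essentially the same route as the paper: the paper's proof of the $h$ part simply declares the $n\lambda$ constant "immediate" (the reverse triangle inequality plus $\normone{Z}\leq n\normfro{Z}$ that you spell out is exactly the implicit argument), and your treatment of $\nabla c$ — bounding $\normfro{(U_1-U_2)V^\top + V(U_1-U_2)^\top}$ by $2\normfro{U_1-U_2}\normfro{V}$ and taking the supremum over $\normfro{V}\leq 1$ — is identical to the paper's computation. No gaps; your version is just slightly more explicit.
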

\begin{proof}
	Since $h(Z)=\lambda\|Z\|_1$ where $Z\in\br^{n\times n}$, we immediately get that $h$ is $n\lambda$-Lipschitz continuous. Secondly, observing that $\nabla c(U)V = UV^\top + VU^\top $ for any $V\in\br^{n\times k}$, we have 
	\[ \|\nabla c(U_1)  - \nabla c(U_2) \|_{\text{op}} =\max_{\normfro{V}=1 } \|(U_1-U_2)V^\top +V(U_1-U_2)^\top \|_{F}\leq 2\normfro{U_1-U_2}.    \]
	Hence,  $\nabla c(x)$ is $2$-Lipschitz continuous. 
\end{proof}

\section{Proofs}
 
\subsection{Convergence Analysis of ManPL (Algorithm \ref{alg:manpl-ssc})}\label{sec:proof manpl} 
Before we present the convergence result of ManPL, we need the following useful lemma for proximal linear algorithm.
\begin{lemma}\citep{drusvyatskiy2018efficiency} 
For functions $h$ and $c$ satisfying the properties in Lemma \ref{lem:lipschitz_h_c}, we have the following result:
\begin{equation}
-\frac{L_cL_h}{2}\|U_2-U_1\|_F^2\leq h\Bigl(c(U_2)\Bigr) - h\Bigl(c(U_1) + \nabla c(U_1)(U_2-U_1)\Bigr)\leq \frac{L_cL_h}{2}\|U_2-U_1\|_F^2.
\label{apd:prox_weaU^{k+1}}
\end{equation}
\label{apd:lemma1}
\end{lemma}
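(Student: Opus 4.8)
The plan is to decouple the two hypotheses on $h$ and $c$. First I would invoke the global Lipschitz continuity of $h=\lambda\|\cdot\|_1$ from Lemma~\ref{lem:lipschitz_h_c} to reduce both sides of the claimed inequality to a single estimate on the linearization residual of the smooth map $c$. Writing the two arguments of $h$ as $c(U_2)$ and $c(U_1)+\nabla c(U_1)(U_2-U_1)$, the $L_h$-Lipschitz property gives
\[
\bigl|h(c(U_2)) - h(c(U_1)+\nabla c(U_1)(U_2-U_1))\bigr| \le L_h\,\normfro{c(U_2) - c(U_1) - \nabla c(U_1)(U_2-U_1)},
\]
so the required two-sided bound follows at once provided the residual is at most $\tfrac{L_c}{2}\normfro{U_2-U_1}^2$.

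Second I would bound this linearization residual via the fundamental theorem of calculus applied to $s\mapsto c\bigl(U_1+s(U_2-U_1)\bigr)$ on $[0,1]$, which yields
\[
c(U_2) - c(U_1) - \nabla c(U_1)(U_2-U_1) = \int_0^1 \bigl[\nabla c(U_1+s(U_2-U_1)) - \nabla c(U_1)\bigr](U_2-U_1)\,ds.
\]
Passing the Frobenius norm inside the integral and using $\normfro{\nabla c(\cdot)V}\le \normop{\nabla c(\cdot)}\normfro{V}$ together with the $L_c$-Lipschitz continuity of the Jacobian from Lemma~\ref{lem:lipschitz_h_c}, I would bound the integrand by $L_c\, s\,\normfro{U_2-U_1}^2$, so that
\[
\normfro{c(U_2) - c(U_1) - \nabla c(U_1)(U_2-U_1)} \le L_c\,\normfro{U_2-U_1}^2\int_0^1 s\,ds = \frac{L_c}{2}\normfro{U_2-U_1}^2.
\]
Combining this with the first step gives the bound with constant $\tfrac{L_cL_h}{2}$, and dropping the absolute value produces the stated two-sided inequality.

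Since this is a standard estimate from the proximal-linear literature \citep{drusvyatskiy2018efficiency}, I do not expect a genuine obstacle; the only care needed is bookkeeping with the matrix-valued map $c(U)=UU^\top$ and the interplay between the operator norm (in which the Jacobian Lipschitz constant $L_c=2$ is stated in Lemma~\ref{lem:lipschitz_h_c}) and the Frobenius norm appearing in the conclusion. The mildly delicate point is justifying the norm passage inside the integral, which is exactly the compatibility inequality $\normfro{\nabla c(\cdot)V}\le\normop{\nabla c(\cdot)}\normfro{V}$; this requires no new idea beyond the Lipschitz constants already established in Lemma~\ref{lem:lipschitz_h_c}.
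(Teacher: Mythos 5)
Your proposal is correct and follows essentially the same route as the paper's proof: both invoke the $L_h$-Lipschitz continuity of $h$ to reduce to bounding the linearization residual of $c$, express that residual via the fundamental theorem of calculus, and bound the integrand using the operator-norm compatibility and the $L_c$-Lipschitz continuity of $\nabla c$, yielding the factor $\int_0^1 s\,ds = \tfrac{1}{2}$. The only cosmetic difference is that you bound the residual separately before multiplying by $L_h$, whereas the paper carries $L_h$ through the chain of inequalities; the mathematical content is identical.
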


\begin{proof}
It follows form Lemma \ref{lem:lipschitz_h_c} that
\begin{align*}
& \Bigl|h(c(U_2)) - h(c(U_1) + \nabla c(U_1)(U_2-U_1))\Bigr| \\
\leq & L_h\|c(U_2)- \left(c(U_1)+\nabla c(U_1)(U_2-U_1)\right)\|_F\\
=&L_h\left\|\int_{0}^{1}\left(\nabla c(U_1+t(U_2-U_1)) - \nabla c(U_1)\right)(U_2-U_1)dt\right\|_F
\\ \leq & L_h\int_{0}^{1}\|\left(\nabla c(U_1+t(U_2-U_1)) -\nabla c(U_1)\right)\|_{\text{op}}\|U_2-U_1\|_Fdt \\
\leq & L_hL_c\left(\int_0^1 tdt\right)\|U_2-U_1\|_F^2 = \frac{L_cL_h}{2}\|U_1-U_2\|_F^2.
\end{align*} 
This completes the proof. 
\end{proof}

Now, we start to analyze the convergence and iteration complexity of our ManPL algorithm. 
The  following lemma states that the minimizer of \eqref{ManPL-alg-1} is a descent direction on the tangent space. 
\begin{lemma}\label{apd:usefullemma}
Let $V^{k}$ be the minimizer of \eqref{ManPL-alg-1}, the following holds for any $\alpha \in [0,1]$, with any $t>0$: 
\begin{equation}\label{apd:Useful_equation}
\begin{aligned}
\langle \nabla f(U^{k})),\alpha V^{k}\rangle + \frac{1}{2t}\normfro{\alpha V^{k}}^2+ h\Bigl(c(U^{k}) + \alpha\nabla c(U^{k})V^{k}\Bigr) - h\Bigl(c(U^{k})\Bigr)\leq  \frac{\alpha^2-2\alpha}{2t}\|V^{k}\|_F^2.
\end{aligned}
\end{equation}
\end{lemma}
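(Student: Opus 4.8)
The plan is to read the left-hand side of \eqref{apd:Useful_equation} as the objective gap of the subproblem \eqref{ManPL-alg-1} between the scaled point $\alpha V^{k}$ and the origin, and then to control that gap by combining two distinct convexity properties of the subproblem. Write
\[
q(V) := \langle \nabla f(U^{k}), V\rangle + h\bigl(c(U^{k}) + \nabla c(U^{k})V\bigr) + \frac{1}{2t}\normfro{V}^2,
\]
so that $V^{k} = \argmin_{V\in\T_{U^k}\M} q(V)$. Because $\nabla c(U^{k})$ is a linear map and $h$ is convex, $q$ is convex, and the regularizer makes it $\tfrac{1}{t}$-strongly convex. Noting $q(0) = h(c(U^{k}))$ and that $q(\alpha V^{k}) = \alpha\langle \nabla f(U^{k}), V^{k}\rangle + h(c(U^{k}) + \alpha\nabla c(U^{k})V^{k}) + \tfrac{\alpha^2}{2t}\normfro{V^{k}}^2$, I observe that the left-hand side of \eqref{apd:Useful_equation} is exactly $q(\alpha V^{k}) - q(0)$. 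Thus the claim reduces to showing $q(\alpha V^{k}) - q(0) \le \frac{\alpha^2-2\alpha}{2t}\normfro{V^{k}}^2$.

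The first step peels off the $h$-term by Jensen's inequality. Writing $c(U^{k}) + \alpha\nabla c(U^{k})V^{k} = (1-\alpha)\,c(U^{k}) + \alpha\bigl(c(U^{k}) + \nabla c(U^{k})V^{k}\bigr)$ and using convexity of $h$ gives $h(c(U^{k}) + \alpha\nabla c(U^{k})V^{k}) \le (1-\alpha)h(c(U^{k})) + \alpha\, h(c(U^{k}) + \nabla c(U^{k})V^{k})$. Substituting into $q(\alpha V^{k})$ and cancelling $q(0)=h(c(U^{k}))$ yields
\[
q(\alpha V^{k}) - q(0) \le \alpha\Bigl[\langle \nabla f(U^{k}), V^{k}\rangle + h(c(U^{k}) + \nabla c(U^{k})V^{k}) - h(c(U^{k}))\Bigr] + \frac{\alpha^2}{2t}\normfro{V^{k}}^2 .
\]
The bracketed quantity is precisely $q(V^{k}) - q(0) - \tfrac{1}{2t}\normfro{V^{k}}^2$, so it remains to control $q(V^{k}) - q(0)$. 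For that, the second step invokes strong convexity: since $V^{k}$ minimizes the $\tfrac{1}{t}$-strongly convex $q$ over the linear subspace $\T_{U^k}\M$ and $0\in\T_{U^k}\M$, testing $q$ along the segment from $V^{k}$ to $0$ (which remains in the subspace) gives the standard estimate $q(0) \ge q(V^{k}) + \tfrac{1}{2t}\normfro{V^{k}}^2$, whence the bracket is at most $-\tfrac{1}{t}\normfro{V^{k}}^2$. Inserting this into the display produces $q(\alpha V^{k}) - q(0) \le -\tfrac{\alpha}{t}\normfro{V^{k}}^2 + \tfrac{\alpha^2}{2t}\normfro{V^{k}}^2 = \frac{\alpha^2-2\alpha}{2t}\normfro{V^{k}}^2$, which is the assertion.

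The one point that genuinely must be handled correctly is that the naive optimality bound $q(V^{k}) \le q(0)$ is too weak: on its own it only delivers a factor $-\alpha$ rather than $-2\alpha + \alpha^2$ in front of $\tfrac{1}{2t}\normfro{V^{k}}^2$, and since $-\alpha \ge \alpha^2 - 2\alpha$ on $[0,1]$ this fails to give the stated inequality. The extra $\tfrac{1}{2t}\normfro{V^{k}}^2$ supplied by strong convexity, together with the convex-combination structure that confines the $\alpha^2$ term to the quadratic part, is exactly what closes the gap. Everything else is routine bookkeeping, and nothing about the manifold enters beyond the fact that $\T_{U^k}\M$ is a linear subspace, so that both $\alpha V^{k}$ and $0$ are admissible and the two convexity estimates above apply verbatim.
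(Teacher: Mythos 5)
Your proof is correct, and its second half (Jensen's inequality on the $h$-term to confine the $\alpha^2$ to the quadratic part, then inserting a bound of $-\tfrac{1}{t}\normfro{V^k}^2$ for the bracket) coincides with the final step of the paper's proof. Where you differ is in how the key inequality
\[
\langle \nabla f(U^k), V^k\rangle + h\bigl(c(U^k)+\nabla c(U^k)V^k\bigr) - h\bigl(c(U^k)\bigr) \le -\tfrac{1}{t}\normfro{V^k}^2
\]
is obtained. The paper derives it by hand: it compares $q(V^k)$ against $q(\alpha V^k)$ using minimality of $V^k$ (both points lie in the tangent space), applies convexity of $h$, divides by $1-\alpha$, and lets $\alpha \to 1$. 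You instead invoke the standard quadratic-growth property of a $\tfrac{1}{t}$-strongly convex function at its constrained minimizer, $q(0) \ge q(V^k) + \tfrac{1}{2t}\normfro{V^k}^2$, which delivers the same bound in one line. These are genuinely the same fact --- the paper's limiting argument is essentially an inline proof of quadratic growth specialized to this subproblem --- but your packaging is more modular and makes transparent exactly why the naive bound $q(V^k)\le q(0)$ is insufficient (it loses the extra $\tfrac{1}{2t}\normfro{V^k}^2$ that strong convexity supplies), a point the paper's proof obscures inside the $\alpha\to 1$ limit. The paper's version buys self-containedness: it never needs to name strong convexity or the growth lemma, only convexity of $h$ and minimality of $V^k$. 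Your observation that the only manifold input is linearity of $\T_{U^k}\M$ (so that $0$, $\alpha V^k$, and the connecting segment are all feasible) is also the correct and complete justification for why Euclidean convexity arguments apply verbatim.
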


\begin{proof}
Since $V^{k}$ is the minimizer of \eqref{ManPL-alg-1}, for any $\alpha \in [0,1)$, we have:
\begin{align*}
    & \langle \nabla f(U^{k}),\alpha V^{k}\rangle + \frac{1}{2t}\|\alpha V^{k}\|_F^2 + h\Bigl(c(U^{k}) + \alpha \nabla c (U^{k})V^{k}\Bigr)\\
    \geq &  \langle \nabla f(U^{k}),V^{k}\rangle + \frac{1}{2t}\|V^{k}\|_F^2 + h\Bigl(c(U^{k}) + \nabla c(U^{k})V^{k}\Bigr),
\end{align*}
which implies that 
\begin{equation*}
(1 - \alpha)\langle \nabla f(U^{k}),V^{k}\rangle + \frac{1 - \alpha^2}{2t}\|V^{k}\|_F^2+h\Bigl(c(U^{k}) + \nabla c(U^{k})V^{k}\Bigr)
 - h\Bigl(c(U^{k}) + \alpha \nabla c(U^{k})V^{k}\Bigr) \leq 0.
\end{equation*}
Using the convexity of $h$, we have 
\begin{equation*}
\langle \nabla f(U^{k}),V^{k}\rangle + \frac{1+\alpha}{2t}\|V^{k}\|_F^2 + h\Bigl(c(U^{k}) + \nabla c(U^{k})V^{k}\Bigr) - h\Bigl(c(U^{k})\Bigr)\leq 0.
\end{equation*}
Letting $\alpha \rightarrow 1$, we get 
\begin{equation*}
\langle \nabla f(U^{k})),V^{k}\rangle + h\Bigl(c(U^{k}) + \nabla c(U^{k})V^{k}\Bigr) - h\Bigl(c(U^{k})\Bigr) \leq  -\frac{1}{t}\|V^{k}\|_F^2.
\end{equation*}
Finally, from the convexity of $h$ we get 
\begin{equation*}
\begin{aligned}
&\langle \nabla f(U^{k})),\alpha V^{k}\rangle + \frac{1}{2t}\normfro{\alpha V^{k}}^2+ h\Bigl(c(U^{k}) + \alpha\nabla c(U^{k})V^{k}\Bigr) - h\Bigl(c(U^{k})\Bigr) \\
\leq & \alpha\left(\langle  \nabla f(U^{k}),  V^{k}\rangle  +  h\Bigl(c(U^{k}) + \nabla c(U^{k})V^{k}\Bigr) - h\Bigl(c(U^{k})\Bigr)\right) + \frac{\alpha^2}{2t }\|V^{k}\|_F^2
\\
\leq &  \frac{\alpha^2-2\alpha}{2t}\|V^{k}\|_F^2,
\end{aligned}
\end{equation*}
which completes the proof. 
\end{proof}

The following lemma suggests that the new point ${U^{k+1}} =  \Retr_{U^{k}}(\alpha V^{k})$ has a lower function value. 

\begin{lemma}\label{lemma:ManPL_convergence}
Given any $t>0$, for $V^{k}$ and \({U^{k+1}}\) computed in \eqref{ManPL-alg}, there exists a constant $\bar{\alpha} > 0$ such that
\begin{equation}\label{apd:ManPL_convergence}
    F( {U^{k+1}}) - F( {U^{k}}) \leq -\frac{\alpha}{2t}\|V^{k}\|_F^2, \quad  \forall\  0\leq \alpha\leq \min\{1,\bar{\alpha}\}.
\end{equation}
\end{lemma}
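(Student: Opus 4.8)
The plan is to decompose the one-step decrease as $F(U^{k+1})-F(U^k) = \bigl(f(U^{k+1})-f(U^k)\bigr) + \bigl(h(c(U^{k+1}))-h(c(U^k))\bigr)$, bound each of the two pieces in terms of the tangent step $\alpha V^k$ up to second-order remainders, and then feed the result into the descent inequality \eqref{apd:Useful_equation} of Lemma \ref{apd:usefullemma}. The latter is engineered to produce exactly the leading term $-\frac{\alpha}{t}\|V^{k}\|_F^2$; since every remainder I generate will be of order $\alpha^2\|V^{k}\|_F^2$, taking $\alpha$ small enough will let me absorb half of the leading term and conclude \eqref{apd:ManPL_convergence} with $\bar\alpha$ chosen as a fixed multiple of the reciprocal of the aggregated second-order constant.

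First I would set $U^{k+1}=\Retr_{U^k}(\alpha V^k)$ and write $U^{k+1}-U^k = \alpha V^k + r_k$, where the retraction remainder $r_k := \Retr_{U^k}(\alpha V^k)-(U^k+\alpha V^k)$ obeys $\|r_k\|_F \le M_2\,\alpha^2\|V^{k}\|_F^2$ by \eqref{apd:Retrbound2} and $\|U^{k+1}-U^k\|_F \le M_1\,\alpha\|V^{k}\|_F$ by \eqref{apd:Retrbound1}. For the smooth part I would invoke the $L_f$-Lipschitz descent inequality $f(U^{k+1}) \le f(U^k) + \langle \nabla f(U^k),\,U^{k+1}-U^k\rangle + \frac{L_f}{2}\|U^{k+1}-U^k\|_F^2$; substituting the decomposition and applying Cauchy--Schwarz to $\langle \nabla f(U^k), r_k\rangle \le \|\nabla f(U^k)\|_F\,M_2\alpha^2\|V^{k}\|_F^2$ isolates the term $\langle \nabla f(U^k),\alpha V^k\rangle$ and leaves remainders bounded by a constant times $\alpha^2\|V^{k}\|_F^2$.

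For the composite part I would apply Lemma \ref{apd:lemma1} with $U_1=U^k$, $U_2=U^{k+1}$ to get $h(c(U^{k+1})) \le h\bigl(c(U^k)+\nabla c(U^k)(U^{k+1}-U^k)\bigr) + \frac{L_cL_h}{2}\|U^{k+1}-U^k\|_F^2$. Then, using that $h$ is $L_h$-Lipschitz together with $\|\nabla c(U^k)\|_{\text{op}}\le L_c=2$ from Lemma \ref{lem:lipschitz_h_c}, I would replace $\nabla c(U^k)(U^{k+1}-U^k)$ by its first-order piece $\alpha\nabla c(U^k)V^k$ at a cost of at most $L_h\|\nabla c(U^k)r_k\|_F \le 2L_hM_2\alpha^2\|V^{k}\|_F^2$. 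This reduces the composite difference to $h(c(U^k)+\alpha\nabla c(U^k)V^k)-h(c(U^k))$ plus further $O(\alpha^2\|V^{k}\|_F^2)$ terms. Collecting everything gives $F(U^{k+1})-F(U^k) \le \langle \nabla f(U^k),\alpha V^k\rangle + h(c(U^k)+\alpha\nabla c(U^k)V^k)-h(c(U^k)) + C\alpha^2\|V^{k}\|_F^2$, and \eqref{apd:Useful_equation} bounds the first three terms by $\frac{\alpha^2-2\alpha}{2t}\|V^{k}\|_F^2-\frac{\alpha^2}{2t}\|V^{k}\|_F^2 = -\frac{\alpha}{t}\|V^{k}\|_F^2$, so that $F(U^{k+1})-F(U^k) \le \bigl(-\tfrac1t + C\alpha\bigr)\alpha\|V^{k}\|_F^2$. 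Setting $\bar\alpha:=\frac{1}{2tC}$ forces $-\frac1t+C\alpha \le -\frac{1}{2t}$ for all $\alpha\le\min\{1,\bar\alpha\}$, which is \eqref{apd:ManPL_convergence}.

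The main obstacle is bookkeeping rather than any single hard estimate: the aggregate constant $C$ must be uniform in $k$ so that $\bar\alpha$ does not degenerate. This needs uniform bounds on $\|\nabla f(U^k)\|_F$ and $\|\nabla c(U^k)\|_{\text{op}}$, which I would justify from the compactness of the Stiefel manifold $\M$ (and, for $\nabla c$, directly from $\|\nabla c(U)\|_{\text{op}}\le 2$ in Lemma \ref{lem:lipschitz_h_c}). The other point requiring care is the routing of the quadratic proximal term $\frac{\alpha^2}{2t}\|V^{k}\|_F^2$: it is precisely this term inside \eqref{apd:Useful_equation} that is subtracted to leave the clean $-\frac{\alpha}{t}\|V^{k}\|_F^2$, so it must not be mistakenly lumped together with the second-order retraction remainders.
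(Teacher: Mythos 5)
Your proposal is correct and follows essentially the same route as the paper's own proof: the same splitting into the smooth part (handled by $L_f$-smoothness plus the retraction bounds \eqref{apd:Retrbound1}--\eqref{apd:Retrbound2}) and the composite part (handled by Lemma \ref{apd:lemma1} plus Lipschitzness of $h$ to swap $\nabla c(U^k)(U^{k+1}-U^k)$ for $\alpha\nabla c(U^k)V^k$), followed by invoking \eqref{apd:Useful_equation} for the leading $-\frac{\alpha}{t}\|V^k\|_F^2$ term and choosing $\bar\alpha$ as the reciprocal of the aggregated second-order constant times $2t$. Your uniformity remark (compactness of $\M$ giving $G=\max_{U\in\M}\|\nabla f(U)\|_F$ and the operator-norm bound on $\nabla c$) matches the constants $c_0$ and $L_c$ used in the paper.
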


\begin{proof}
We will prove by induction.  Denote \(U^{k}_+ = U^{k} + \alpha V^{k}\).   For $k = 0$, 
by using the convexity of $h$ and Lipchitz continuity of $c$, we can show that:
\begin{align} \label{apd:result_1}
\begin{split}
      & h\Bigl(c({U^{k+1}})\Bigr) -h\Bigl(c(U^{k}) + \alpha\nabla c(U^{k})V^{k}\Bigr)\\
    = &h\Bigl(c({U^{k+1}})\Bigr) - h\Bigl(c(U^{k}) + \nabla c(U^{k})( {U^{k+1}} - U^{k})\Bigr)\\
      &+h\Bigl(c(U^{k}) + \nabla c(U^{k})({U^{k+1}} - U^{k})\Bigr) -h\Bigl(c(U^{k}) +  \nabla c(U^{k})(U^{k}_+ - U^{k})\Bigr) \\
    \stackrel{\eqref{apd:prox_weaU^{k+1}}}{\leq} &\frac{  L_hL_c}{2} \|{U^{k+1}}-U^{k}\|_F^2 + L_h\|\nabla c(U^{k})({U^{k+1}} -U^{k}_+)\|_F\\
     \stackrel{\eqref{apd:Retrbound1}}{\leq}&\frac{\alpha^2 L_hL_c}{2}M_1^2\|V^{k}\|_F^2 + L_h\|\nabla c(U^{k})({U^{k+1}} -U^{k}_+)\|_F\\
    \stackrel{\eqref{apd:Retrbound2}}{\leq} &\frac{\alpha^2 L_cL_h}{2}M_1^2\|V^{k}\|_F^2 + L_h L_c M_2 \alpha^2\|V^{k}\|_F^2 = \Bigl(\frac{1}{2}M_1^2 + M_2\Bigr)L_cL_h\alpha^2\|V^{k}\|_F^2.
    \end{split}   
\end{align}
Since $\nabla f(X)$ is $L_f$ Lipschiz continuous, we  obtain
\begin{align}
\begin{split}
    f( {U^{k+1}}) - f(U^{k}) &\leq  \langle \nabla f(U^{k}), {U^{k+1} } - U^{k}\rangle + \frac{L_f}{2}\| {U^{k+1}}- U^{k}\|_F^2\\
    & = \langle \nabla f(U^{k}), {U^{k+1}}-U^{k}_+ +{U^{k}}_+ -U^{k}\rangle + \frac{L_f}{2}\| {U^{k+1}} - U^{k}\|_F^2\\
    & \stackrel{\eqref{apd:Retrbound2}}{\leq} M_2\|\nabla f(U^{k})\|_F\|\alpha V^{k}\|_2^2 + \alpha \langle \nabla f(U^{k}),V^{k}\rangle + \frac{M_1^2L_f}{2}\|\alpha V^{k}\|_F^2\\
    &  \leq c_0 \alpha^2\|V^{k}\|_F^2 + \alpha \langle \nabla f (U^{k}),V^{k}\rangle,
    \end{split}
    \label{apd:result_2}
\end{align}
where $c_0 = M_2G+\frac{M_1^2 L_f}{2}$ and $G=\max_{U\in\cM}\normfro{\nabla f(U)}$.

Now we need to show that the objective value decreases. 
It follows that
\begin{align*}
    & F({U^{k+1}}) -F({U^{k}})\\
\stackrel{\eqref{apd:result_2}}{\leq} & \alpha \langle \nabla f(U^{k}) ,V^{k}\rangle + c_0\alpha^2\|V^{k}\|^2 + h\Bigl(c({U^{k+1}})\Bigr) -h\Bigl(c({U^{k}})+\alpha\nabla c(U^{k}) V^{k} \Bigr) \\
& +h\Bigl(c({U^{k}})+\alpha\nabla c(U^{k}) V^{k} \Bigr) - h\Bigl(c({U^{k}})\Bigr)\\
\stackrel{\eqref{apd:result_1}}{\leq} & \alpha \langle \nabla f(U^{k}),V^{k}\rangle + c_0 \alpha^2\|V^{k}\|_2^2 + (\frac{1}{2}M_1^2 + M_2)L_cL_h\alpha^2\|V^{k}\|_2^2 \\
&+ \alpha \left(h\Bigl(c(U^{k}) + \nabla c(U^{k}) V^{k} \Bigr) - h\Bigl(c(U^{k})\Bigr)\right)\\
\stackrel{\eqref{apd:Useful_equation}
}{\leq} & \left[\left(c_0 + \left(\frac{1}{2}M_1^2 + M_2\right)L_cL_h \right)\alpha^2 - \frac{\alpha}{t} \right]\|V^{k}\|_F^2.
\end{align*}

 Letting  
\(\overline{\alpha} =\frac{1}{2\Bigl(c_0 + \Bigl(\frac{1}{2}M_1^2 + M_2\Bigr)L_cL_h \Bigr)t}\), we get that for any $0\leq \alpha \leq \min\{ 1,\bar{\alpha} \}$, 
\begin{equation}
    F(\Retr_{U^{k}}(\alpha V^{k})) - F({U^{k}}) \leq -\frac{\alpha}{2t}\|V^{k}\|_F^2.
\end{equation}
Thus, the result \eqref{apd:ManPL_convergence} holds for ${k}=0$. 
Using induction and the same procedure as above, we can show that \eqref{apd:ManPL_convergence} holds for all $k \geq 1$.
The proof is completed.
\end{proof}

Following \citep{chen2018proximal}, we now define the $\epsilon$-stationary point of \eqref{SSC-manopt-rewrite}.
\begin{definition}\label{def:eps-sp}
Given any $t>0$,	$U^{k}$ is called an $\epsilon$-stationary point of \eqref{SSC-manopt-rewrite}, if $V^{k}$ computed in \eqref{ManPL-alg-1} satisfies $\|V^{k}/t\|_F \leq \epsilon$.
\end{definition}

The following lemma indicates that Definition \ref{def:eps-sp} is a reasonable defintion for the $\epsilon$-stationary point.
\begin{lemma}\label{optimal_condition}
If $V^{k} = 0$, then $U^{k}$ is a stationary point of problem \eqref{SSC-manopt-rewrite}.
\end{lemma}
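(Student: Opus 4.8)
The plan is to show that if the subproblem minimizer $V^k$ vanishes, then the first-order optimality condition \eqref{opt-cond} holds at $U^k$, which by Definition \ref{stationary_point} makes $U^k$ a stationary point. The key idea is that the subproblem \eqref{ManPL-alg-1} is a convex minimization over the linear subspace $\T_{U^k}\M$, so $V^k=0$ being its minimizer translates into a first-order stationarity condition for that subproblem, and I would then match that condition term-by-term with \eqref{opt-cond}.

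First I would write down the objective of \eqref{ManPL-alg-1}, namely $\phi(V) := \langle \nabla f(U^k), V\rangle + h(c(U^k) + J(U^k)V) + \frac{1}{2t}\|V\|_F^2$, which is convex in $V$ since $h$ is convex and $J(U^k)$ acts linearly. Since the feasible set $\T_{U^k}\M$ is a linear subspace, the optimality condition for $V^k=0$ to minimize $\phi$ over $\T_{U^k}\M$ is
\begin{equation*}
0 \in \Proj_{\T_{U^k}\M}\left( \nabla f(U^k) + J(U^k)^\top \partial h(c(U^k)) + \tfrac{1}{t}V^k \right).
\end{equation*}
Substituting $V^k = 0$ kills the quadratic term, leaving $0 \in \Proj_{\T_{U^k}\M}\bigl(\nabla f(U^k) + J(U^k)^\top \partial h(c(U^k))\bigr)$. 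Here I would recall that $J(U^k) = \nabla c(U^k)$, and that for a smooth $f$ the Riemannian gradient satisfies $\grad f(U^k) = \Proj_{\T_{U^k}\M}\nabla f(U^k)$, as noted in Appendix \ref{sec:appen-pre}. Rearranging then yields exactly $0 \in \Proj_{\T_{U^k}\M}\nabla c(U^k)^\top \partial h(c(U^k)) + \grad f(U^k)$, which is \eqref{opt-cond}.

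The one technical point requiring care is the subdifferential calculus: I must justify that the Clarke/convex subdifferential of $V \mapsto h(c(U^k)+J(U^k)V)$ at $V=0$ equals $J(U^k)^\top \partial h(c(U^k))$, i.e. a chain rule for the composition of the convex nonsmooth $h$ with the affine map $V\mapsto c(U^k)+J(U^k)V$. Since the inner map is affine and $h$ is convex, the standard convex chain rule applies without any constraint-qualification subtleties, so this step is routine but should be stated. I would also note that restricting the stationarity to the tangent space (via $\Proj_{\T_{U^k}\M}$) is precisely what makes the Euclidean subproblem condition coincide with the manifold optimality condition \eqref{opt-cond}, using the regularity of $h(c(U))$ established in Appendix \ref{sec:appen-pre}.

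The main obstacle, such as it is, is purely notational bookkeeping rather than a genuine difficulty: keeping the projections, the transpose of the Jacobian, and the identification of $\grad f$ with $\Proj_{\T_{U^k}\M}\nabla f$ consistent so that the derived condition lines up verbatim with \eqref{opt-cond}. No estimates or iteration-complexity arguments are needed here; the lemma is a direct consequence of first-order optimality of the convex subproblem combined with the definition of the stationary point.
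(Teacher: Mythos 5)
Your proof is correct and follows essentially the same route as the paper: write the first-order optimality condition of the convex subproblem \eqref{ManPG-alg-1}-style problem over the linear subspace $\T_{U^k}\M$, set $V^k=0$, and observe that the resulting inclusion is exactly \eqref{opt-cond}. Your added remarks on the convex chain rule for $h$ composed with an affine map and the identification $\grad f(U^k)=\Proj_{\T_{U^k}\M}\nabla f(U^k)$ simply make explicit steps the paper leaves implicit.
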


\begin{proof} 
The optimality conditions of \eqref{ManPL-alg-1} are given by:
\begin{equation*}
0 \in \Proj_{\T_{U^k}\M}{\nabla c(U^{k})}^\top\partial h(c(U^{k}) + \nabla{c(U^{k})} V^{k})+\frac{1}{t}V^{k} + \grad f(U^{k}).
\end{equation*}
If $V^{k} = 0$, it follows that 
\begin{equation*}
0 \in \Proj_{\T_{U^k}\M}{\nabla c(U^{k})}^\top\partial h(c(U^{k})) + \grad f(U^{k}),
\end{equation*}
which is the first-order optimality condition of \eqref{SSC-manopt-rewrite}. 
\end{proof}

\begin{proof}{Proof of Theorem \ref{thm:manpl-convergence}.}
	By using Lemma \ref{lemma:ManPL_convergence}, the claimed result directly follows from the proof of \citep[Theorem 5.5]{chen2018proximal}. We omit the details for brevity. 
\end{proof}

\subsection{Convergence Analysis of AManPL (Algorithm \ref{alg:MKSSC})}\label{sec:proof of theorem amanpl}
 
\begin{lemma}(Sufficient decrease in the $w$ subproblem)\label{lem:suff_decrease_w} 
From \eqref{prob:mkssc-w}, we have
\be\label{lem:suff_decrease_w-eq}
\bar F({U^{k+1}},w^{k+1}) - \bar F({U^{k+1}},w^{k}) \leq  -\frac{\rho}{2 }\| w^{k+1} - w^k\|_1^2.
\ee
\end{lemma}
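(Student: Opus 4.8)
The plan is to reduce the statement to a strong-convexity estimate for the $w$-subproblem \eqref{prob:mkssc-w}. First I would observe that, with $U$ frozen at $U^{k+1}$, the only $w$-dependent part of $\bar F(U^{k+1},w)$ is
\[
\phi(w) := c^\top w + \rho\sum_{\ell=1}^T w_\ell\log(w_\ell), \qquad c_\ell = \langle U^{k+1}{U^{k+1}}^\top, L^{(\ell)}\rangle,
\]
since the term $\lambda\|U^{k+1}{U^{k+1}}^\top\|_1$ does not involve $w$ and therefore cancels in the difference, giving $\bar F(U^{k+1},w^{k+1}) - \bar F(U^{k+1},w^k) = \phi(w^{k+1}) - \phi(w^k)$. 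Thus it suffices to prove $\phi(w^{k+1}) - \phi(w^k) \leq -\frac{\rho}{2}\|w^{k+1}-w^k\|_1^2$.

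The key ingredient is that the negative-entropy function $H(w) = \sum_{\ell=1}^T w_\ell \log(w_\ell)$ is $1$-strongly convex with respect to the $\ell_1$-norm on the probability simplex $\Delta := \{w : \sum_{\ell=1}^T w_\ell = 1,\ w_\ell \geq 0\}$; equivalently, its Bregman divergence is the Kullback--Leibler divergence, and Pinsker's inequality $D_{\mathrm{KL}}(p\|q) \geq \frac{1}{2}\|p-q\|_1^2$ furnishes exactly this modulus. Adding the linear term $c^\top w$ does not change the modulus, so $\phi$ is $\rho$-strongly convex with respect to $\|\cdot\|_1$ on $\Delta$.

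Finally I would invoke the optimality of $w^{k+1}$. By construction $w^{k+1} = \argmin_{w\in\Delta}\phi(w)$, with the closed form given by the softmax \eqref{prob:mkssc-w-sol}. The first-order optimality condition yields $\langle g, w^k - w^{k+1}\rangle \geq 0$ for a subgradient $g\in\partial\phi(w^{k+1})$, and combining this with $\rho$-strong convexity gives
\[
\phi(w^k) \geq \phi(w^{k+1}) + \langle g, w^k - w^{k+1}\rangle + \frac{\rho}{2}\|w^k - w^{k+1}\|_1^2 \geq \phi(w^{k+1}) + \frac{\rho}{2}\|w^k - w^{k+1}\|_1^2,
\]
which rearranges to the claimed inequality \eqref{lem:suff_decrease_w-eq}. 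The only genuinely non-routine step is the $\ell_1$-strong convexity of the entropy, i.e.\ Pinsker's inequality; everything else is bookkeeping around the cancellation of the $w$-independent terms and the standard strong-convexity-plus-optimality inequality.
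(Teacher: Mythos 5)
Your proof is correct and follows essentially the same route as the paper, which simply asserts that the result ``follows from the $\rho$-strong convexity of the objective function of \eqref{prob:mkssc-w} over the probability simplex'' and combines it implicitly with the optimality of $w^{k+1}$. You have merely made explicit the details the paper leaves unsaid --- the cancellation of the $w$-independent terms, the Pinsker/KL argument showing the entropy is $1$-strongly convex with respect to $\|\cdot\|_1$ on the simplex (which is exactly the norm needed for the stated bound), and the first-order optimality inequality at the softmax minimizer.
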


\begin{proof}
This follows from the $\rho$-strong convexity of the objective function of \eqref{prob:mkssc-w} over the probability simplex. 
\end{proof}

\begin{lemma}\label{lem:sufficient_descent}
 Let $(V^k,U^{k+1})$ be computed from \eqref{AManPL-alg-U}. 
There exists $\hat{\alpha}>0$ such that  for any $0\leq \alpha\leq \min\{1,\hat{\alpha}\}$, it holds that 
\begin{equation}\label{apd:convergence_result}
\bar F({U^{k+1}}, w^{k+1}) -  \bar F({U^{k}},w^{k}) \leq -\left(\frac{\alpha}{2t}  \|V^{k}\|_F^2 + \frac{\rho}{2}\|w^{k+1}-w^{k}\|_1^2\right).
\end{equation}
\end{lemma}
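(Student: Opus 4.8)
The plan is to split the one-step decrease of $\bar F$ into a contribution from the $U$-update and a contribution from the $w$-update, bound each separately, and add them. Concretely, I would write the telescoping decomposition
\[
\bar F(U^{k+1},w^{k+1}) - \bar F(U^k,w^k) = \underbrace{\bigl(\bar F(U^{k+1},w^k)-\bar F(U^k,w^k)\bigr)}_{U\text{-step}} + \underbrace{\bigl(\bar F(U^{k+1},w^{k+1})-\bar F(U^{k+1},w^k)\bigr)}_{w\text{-step}}.
\]
For the $w$-step, Lemma~\ref{lem:suff_decrease_w} applies directly (with the fixed $U$ there taken to be $U^{k+1}$), giving $\bar F(U^{k+1},w^{k+1})-\bar F(U^{k+1},w^k) \le -\tfrac{\rho}{2}\|w^{k+1}-w^k\|_1^2$.

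For the $U$-step, the key observation is that when $w$ is frozen at $w^k$ the entropy term $\rho\sum_\ell w_\ell^k\log w_\ell^k$ is a constant in $U$ and cancels in the difference $\bar F(U^{k+1},w^k)-\bar F(U^k,w^k)$. What remains is exactly the SSC objective $F$ of \eqref{SSC-manopt-rewrite} with $L$ replaced by $\bar L^k := \sum_{\ell=1}^T w_\ell^k L^{(\ell)}$; moreover, the direction $V^k$ and the retracted point $U^{k+1}$ produced by \eqref{AManPL-alg-U} are generated by precisely the ManPL step \eqref{ManPL-alg} applied to this objective. I would therefore reuse Lemmas~\ref{apd:usefullemma} and~\ref{lemma:ManPL_convergence} verbatim to conclude that there is a threshold $\hat\alpha>0$ with $\bar F(U^{k+1},w^k)-\bar F(U^k,w^k) \le -\tfrac{\alpha}{2t}\|V^k\|_F^2$ for all $0\le\alpha\le\min\{1,\hat\alpha\}$.

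The main obstacle is that the effective Laplacian $\bar L^k$ changes from iteration to iteration, so the constants $L_f$ and $G$ entering Lemma~\ref{lemma:ManPL_convergence}, and hence the threshold $\hat\alpha$, a priori depend on $k$. I would remove this dependence by a compactness argument: since $w^k$ lies in the probability simplex, $\bar L^k$ is a convex combination of the fixed matrices $L^{(1)},\dots,L^{(T)}$, so $\|\bar L^k\|_{\text{op}}\le\max_\ell\|L^{(\ell)}\|_{\text{op}} =: B$ uniformly in $k$. Because $\nabla_U f(U,w^k)=2\bar L^k U$, this yields the uniform Lipschitz bound $L_f\le 2B$ and, using $\|U\|_F=\sqrt{C}$ on the Stiefel manifold, the uniform gradient bound $G=\max_{U\in\M}\|2\bar L^k U\|_F\le 2B\sqrt{C}$. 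The remaining constants $L_h=n\lambda$, $L_c=2$ (Lemma~\ref{lem:lipschitz_h_c}) and the retraction constants $M_1,M_2$ (Lemma~\ref{lem:two-inequalities}) are already independent of $k$.

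Consequently, with $c_0 = M_2 G + \tfrac{M_1^2 L_f}{2}$, the threshold $\hat\alpha := \bigl(2\bigl(c_0+(\tfrac12 M_1^2+M_2)L_cL_h\bigr)t\bigr)^{-1}$ can be chosen uniformly over all iterations, so the $U$-step bound holds with the same $\hat\alpha$ for every $k$. Adding the $U$-step and $w$-step inequalities then gives \eqref{apd:convergence_result}, completing the argument.
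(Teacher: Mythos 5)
Your proposal is correct and follows essentially the same route as the paper's proof: decompose the one-step decrease into the $U$-update and the $w$-update, bound the $w$-step by Lemma~\ref{lem:suff_decrease_w}, reduce the $U$-step (with $w$ frozen, so the entropy term cancels) to the ManPL sufficient-decrease result of Lemma~\ref{lemma:ManPL_convergence} applied with $\bar L^k=\sum_\ell w_\ell^k L^{(\ell)}$, and make the threshold $\hat\alpha$ uniform in $k$ by bounding $L_f$ and the gradient norm over the simplex--Stiefel product. The only cosmetic difference is that the paper invokes $\|L^{(\ell)}\|_2=1$ to get $L_f=2$ and defines $G_1$ as a maximum over the compact feasible set, whereas you work with the generic bound $B=\max_\ell\|L^{(\ell)}\|_{\mathrm{op}}$ and $\|U\|_F=\sqrt{C}$, which is equally valid.
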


\begin{proof}
 Firstly we  show that the subproblem \eqref{prob:mkssc-U} is of the same form as \eqref{SSC-manopt-rewrite}. That is, we show that 
	$f(U,w^{k}) = \sum_{\ell=1}^T w_\ell^{k} \langle L^{(\ell)},UU^{\top}\rangle$ is smooth with respect $U$ and its gradient with respect $U$ is Lipschitz continuous. Once we prove this, then \eqref{apd:convergence_result} follows by combining Lemma \ref{lem:suff_decrease_w} and the proof of Lemma \ref{lemma:ManPL_convergence}. Note that for any $w\in \mathbb{R}^T$, we have: 
\begin{align*}
\|\nabla_{U} f(U_1,w^{k}) - \nabla_{U} f(U_2,w^{k})\|_F & = \normfro{ 2 \sum_{\ell=1}^T w_\ell^{k}  L^{(\ell)} (U_1-U_2)} \\ & \leq 2\sum_{\ell=1}^T  w_\ell^{k}\|L^{(\ell)}\|_2\|U_1 -U_2\|_F \leq  2 \|U_1 - U_2\|_F,
\end{align*}
where we use the fact that $\|L^{(\ell)}\|_2 = 1$ for every $\ell$ (followed by the property of the affinity matrix). This implies that for fixed $w^{k}$, $f(\cdot,w^k)$ is smooth and its gradient is Lipschitz continuous with Lipschitz constant $L_{f} = 2$. Now we can use Lemma \ref{lemma:ManPL_convergence}
and let 
\(\hat{\alpha} =\frac{1}{2\Bigl(c_1 + \Bigl(\frac{1}{2}M_1^2 + M_2\Bigr)L_cL_h \Bigr)t}\), where $c_1 = M_2G_1+\frac{L_{f} M_1^2  }{2}$ and 
\[G_1=\max\limits_{U\in\cM, \sum_\ell w_\ell =1, 0\leq w\leq 1}\normfro{\nabla_U f(U,w)},\]
$M_1$ and $M_2$ are constants defined in Lemma \ref{lem:two-inequalities}. From Lemma \ref{lemma:ManPL_convergence} we get that for any $0\leq \alpha \leq \min\{ 1,\hat{\alpha} \}$, 
\begin{equation*}
\bar F(\Retr_{U^{k}}(\alpha V^{k}),w^k) - \bar F({U^{k}},w^k) \leq -\frac{\alpha}{2t}\|V^{k}\|_F^2,
\end{equation*}
which together with \eqref{lem:suff_decrease_w-eq} yields the desired result \eqref{apd:convergence_result}. 
\end{proof}

Since for the $w$-subproblem \eqref{prob:mkssc-park-w}, one always has \eqref{ineq:w_opt}. 
This movtivates us to define the following $\epsilon$-stationary point for problem \eqref{prob:mkssc}. 
\begin{definition}\label{def:eps-AManPL}
	We call a point $(U^k,w^k)$ an $\epsilon$-stationary point of problem \eqref{prob:mkssc} if 
\[ \normfro{V^k/t}  \leq \epsilon,   \]
	where $V_k$ is the optimal solution to \eqref{AManPL-alg-U-1}. 
\end{definition}

The following lemma indicates that Definition \ref{def:eps-AManPL} is a reasonable definition for the $\epsilon$-stationary point of problem \eqref{prob:mkssc}.
\begin{lemma}
For given $(U^k,w^k)$ generated from the $k$-th iteration of AManPL, if $V^k=0$, 	where $V^k$ is the optimal solution to \eqref{AManPL-alg-U-1}, then $(U^k,w^k)$ is a stationary point of problem \eqref{prob:mkssc}.
\end{lemma}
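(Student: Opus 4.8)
The plan is to follow the template of Lemma~\ref{optimal_condition}, but to treat the two blocks $U$ and $w$ separately, since a stationary point of the joint problem \eqref{prob:mkssc} is one at which no first-order descent is possible in either block. Because the feasible set is the product of the Stiefel manifold $\M$ (in $U$) and the probability simplex (in $w$), the tangent cone at a feasible point decomposes as a direct sum, so it suffices to verify (i) the Riemannian first-order condition in $U$ with $w$ frozen at $w^k$, and (ii) the simplex first-order condition in $w$ with $U$ frozen at $U^k$, both evaluated at the common point $(U^k,w^k)$.

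For the $U$-block I would write down the optimality condition of the convex subproblem \eqref{AManPL-alg-U-1}, which, exactly as in the proof of Lemma~\ref{optimal_condition}, reads
\[
0 \in \Proj_{\T_{U^k}\M}{\nabla c(U^k)}^\top \partial h\bigl(c(U^k) + \nabla c(U^k)V^k\bigr) + \tfrac{1}{t}V^k + \grad_U f(U^k,w^k).
\]
Substituting the hypothesis $V^k=0$ collapses this to
\[
0 \in \Proj_{\T_{U^k}\M}{\nabla c(U^k)}^\top \partial h\bigl(c(U^k)\bigr) + \grad_U f(U^k,w^k),
\]
which is precisely the Riemannian optimality condition \eqref{opt-cond} for the reduced problem \eqref{prob:mkssc-U} with $\bar L = \sum_{\ell=1}^T w_\ell^k L^{(\ell)}$; hence $U^k$ is stationary for $\bar F(\cdot,w^k)$ over $\M$.

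For the $w$-block I would simply invoke the construction in Algorithm~\ref{alg:MKSSC}: the iterate $w^k$ is produced by the closed form \eqref{prob:mkssc-w-sol}, i.e.\ it is the \emph{exact} minimizer of the convex subproblem \eqref{prob:mkssc-w} associated with $U^k$. Being the exact minimizer of a convex program over the simplex, $w^k$ automatically satisfies the $w$-optimality condition \eqref{ineq:w_opt}. Combining the two block conditions with the direct-sum decomposition of the tangent cone, $(U^k,w^k)$ is a stationary point of \eqref{prob:mkssc}.

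The only point requiring care — rather than any genuine difficulty — is the bookkeeping of the iterates: one must check that the weights $w^k$ entering the subproblem \eqref{AManPL-alg-U-1} that generates $V^k$ are the same weights that solve the $w$-subproblem at $U^k$. This is guaranteed by the order of updates in Algorithm~\ref{alg:MKSSC}, where $w^k$ is recomputed from $U^k$ (from $U^0$ when $k=0$) before $V^k$ is formed, so that both block optimality conditions are indeed evaluated at the common point $(U^k,w^k)$.
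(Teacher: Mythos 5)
Your proposal is correct and follows essentially the same route as the paper: from $V^k=0$ you extract the $U$-block condition \eqref{ineq:u_opt} via the optimality conditions of \eqref{AManPL-alg-U-1}, you obtain the $w$-block condition \eqref{ineq:w_opt} from the fact that $w^k$ exactly minimizes \eqref{prob:mkssc-w} at $U^k$, and you combine them through the separable (product) structure of the joint first-order conditions, which the paper states as \eqref{prob:mkssc-optcond}. The only cosmetic difference is that the paper expresses the decoupling via the product form of the Riemannian Clarke subdifferential rather than your tangent-cone direct-sum phrasing, and your explicit bookkeeping of the update order is a point the paper leaves implicit.
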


\begin{proof}
Since $V^k = 0$, the optimality condition of \eqref{AManPL-alg-U-1} implies that 
\begin{align}\label{ineq:u_opt}
0 \in \Proj_{\T_{U^k}\M} \left( 2 \sum_{\ell=1}^T w_\ell^k L^{(\ell)} U^k + {\nabla c(U^k)}^\top\partial h(c(U^k))\right).
\end{align}
Moreover, the optimality condition to \eqref{prob:mkssc-w} implies that 
\begin{align}\label{ineq:w_opt}
0 \in c_\ell + \rho ( \log(w_\ell^k) +1) + \partial_{\ell} \mathbb{I}(w^k) ,
\end{align}
where $c_\ell = \langle U^{k}{U^{k}}^\top, L^{(\ell)} \rangle$, $\ell=1,\ldots,T$, and $\mathbb{I}(w)$ denotes the indicator function of the probability simplex constraint $\mathcal{S}:= \{w\mid \sum_{\ell=1}^T w_\ell =1, w_\ell\geq 0, \ell=1,\ldots,T \}$, i.e., $\mathbb{I}(w) = 0$ if $w\in\mathcal{S}$, and $\mathbb{I}(w) = +\infty$ otherwise.  
Moreover, since 
\begin{align*}
 \partial_R \left(h(c(U))  ,\mathbb{I}(w) \right)   
 =  \Proj_{\T_U\M}\left( \nabla c(U)^\top\partial h(c(U)) \right)\times \partial \mathbb{I}(w),
\end{align*}
the first-order optimal conditions of \eqref{prob:mkssc} are given by
\be\label{prob:mkssc-optcond}
\begin{split}
	0 &\in \Proj_{\T_U\M} \left(2 \sum_{\ell=1}^T w_\ell L^{(\ell)} U + {\nabla c(U )}^\top\partial h(c(U)) \right), \\
	0 & \in \langle L^{(\ell)},UU^\top \rangle + \rho (\log(w_\ell) + 1) + \partial_{\ell} \mathbb{I}(w), \quad \forall \ell. 
\end{split}
\ee
From \eqref{ineq:u_opt} and \eqref{ineq:w_opt} we know that $(U^k,w^k)$ satisfies the optimality condition \eqref{prob:mkssc-optcond}. Therefore $(U^k,w^k)$ is a stationary point of \eqref{prob:mkssc}. 
\end{proof}

\begin{proof}{Proof of Theorem \ref{thm:amanpl-convergence}.}
Since the feasible region of \eqref{prob:mkssc} is bounded, there exists a convergent subsequence of $\{(U^k,w^k)\}$. We denote a limit point of $\{(U^k,w^k)\}$ as $\{(U^*,w^*)\}$. Lemma \ref{lem:sufficient_descent} indicates that $\{\bar{F}(U^k,w^k)\}$ monotonically decreases, and therefore we have
\[ \lim_{k\rightarrow \infty} \normfro{V^k/t}^2 + \|{\rho(w^{k+1} - w^k)}\|_1^2 = 0.  \] 
Note that the function $ h(c(U) + \nabla{c(U)} V )$ is convex with respect to $V$, therefore, taking limit for  \eqref{ineq:u_opt} and \eqref{ineq:w_opt} gives 
\begin{align*} 
0 \in \Proj_{\T_{U^*}\M} \left( 2 \sum_{\ell=1}^T w_\ell^* L^{(\ell)} U^* + {\nabla c(U^{*})}^\top\partial h(c(U^{*}))\right),   
\end{align*}
and 
\begin{align*} 
0 \in c_\ell^{*} + \rho ( \log(w_\ell^{*}) +1 ) + \partial_{\ell} \mathbb{I}(w^{*}) , \quad \ell = 1,\ldots,T. 
\end{align*}
This suggests that $(U^*, w^*)$ is a stationary point.  

Morevoer, given $N\geq  \frac{ 2t\hat{\alpha}\gamma (\bar F(U^0,w^0) - \bar F^*)}{\epsilon^2} $ it follows from Lemma \ref{lem:sufficient_descent} that 
\[ \min_{k=1,\ldots,N} \normfro{V^k/t}^2   \leq \epsilon^2. \]
The proof is completed. 
\end{proof}

\bibliographystyle{plainnat}
\bibliography{MKSSC_reference}

\end{document}